\renewcommand\vec[1]{\ensuremath\boldsymbol{#1}}
\renewcommand\cdots{...}
\newcommand{\tF}{\vec{\mathcal{F}}}
\newcommand{\mY}{\mathbf{Y}}
\newcommand{\mX}{\mathbf{X}}
\newcommand{\vx}{\mathbf{x}}
\newcommand{\mbr}[1]{\mathbb{R}^{#1}}
\newcommand{\idx}[1]{\mathcal{I}_{#1}}
\newcommand{\tR}{\vec{\mathcal{R}}}
\newcommand{\vphi}{\boldsymbol{\phi}}
\newcommand{\vpsi}{\boldsymbol{\psi}}
\newcommand{\mPsi}{\vec{\Psi}}
\newcommand{\mF}{\vec{F}}
\newcommand{\set}[1]{\left\{#1\right\}}
\DeclareMathOperator*{\argmin}{arg\,min}
\DeclareMathOperator*{\expect}{\mathbb{E}}
\DeclareMathOperator*{\trace}{Tr}
\DeclareMathOperator*{\avg}{avg}
\newcommand{\expl}[1]{\text{e}^{#1}}
\newtheorem{proposition}{Proposition}
\def\eg{\emph{e.g.}}
\newcommand{\mygthree}[1]{\boldsymbol{\mathcal{G}}\!\left(\!#1\!\right)}
\newcommand{\tG}{\boldsymbol{\mathcal{G}}}
\newcommand{\vOnes}{\boldsymbol{1}}
\newcommand{\mS}{\vec{S}}
\newcommand{\tNnb}{\mathcal{N}}
\newcommand{\mPhi}{\boldsymbol{\Phi}}
\newcommand{\mM}{\boldsymbol{M}}
\newcommand{\stkout}[1]{{\ifmmode\text{\sout{\ensuremath{#1}}}\else\sout{#1}\fi}}
\DeclareMathOperator*{\arcsinh}{arcsinh}
\newcommand{\comment}[1]{}
\newcommand*{\barbar}[1]{\bar{\bar{#1}}}
\newcommand*{\barbarbar}[1]{\bar{\bar{\bar{#1}}}}
\DeclareRobustCommand\onedot{\futurelet\@let@token\bmv@onedotaux}
\def\bmv@onedotaux{\ifx\@let@token.\else.\null\fi\xspace}
\DeclareRobustCommand\onespace{\futurelet\@let@token\bmv@onespaceaux}
\def\bmv@onespaceaux{\ifx\@let@token \else \null\fi\xspace}
\def\eg{\emph{e.g}\onedot} 
\def\ie{\emph{i.e}\onedot} 
\def\cf{\emph{c.f}\onedot} 
\def\etc{\emph{etc}\onedot} 
\def\wrt{w.r.t\onedot} 
\def\etal{\emph{et al}\onedot}
\def\PK#1{{\color{red}{{[#1]}}}}
\begin{document}
\title{Multi-level Second-order Few-shot Learning}
%
%
%

\def\PK#1{\onespace[{\it \color{red} PK: {#1}}].}
\def\hg{}

\author{Hongguang~Zhang,~\IEEEmembership{}
        Hongdong~Li,~\IEEEmembership{}
        and~Piotr~Koniusz~\IEEEmembership{}
\thanks{H. Zhang* is with Systems Engineering Institute, AMS. H. Li and P. Koniusz are with the College of Engineering and Computer Science, Australian National University. P. Koniusz is also with Data61/CSIRO. E-mail: \url{zhang.hongguang@outlook.com}, \url{hongdong.li@anu.edu.au}, \url{piotr.koniusz@data61.csiro.au}. Code: \url{https://github.com/HongguangZhang/mlso-tmm-master}.}
\thanks{This work is supported by National Natural Science Foundation of China (Grant No. 62106282), and Equipment Development Research Fund (Grant No. ZXD2020C2316).}
\thanks{Part of this work wass done during H. Zhang’s
stay at the ANU.  $\quad$Manuscript received May 12, 2021, accepted Jan. 9, 2022. $\quad$DOI: \url{https://doi.org/10.1109/TMM.2022.3142955}\vspace{-0.7cm}}
}

\markboth{IEEE Transactions on Multimedia}%
{Shell \MakeLowercase{\textit{et al.}}: Bare Demo of IEEEtran.cls for IEEE Journals}
%

\maketitle

\begin{abstract}
We propose a  Multi-level Second-order (MlSo) few-shot learning network for supervised or unsupervised few-shot image classification and few-shot action recognition. We leverage so-called power-normalized second-order base learner streams combined with features that express multiple levels of visual abstraction, and we use self-supervised discriminating mechanisms. {As Second-order Pooling (SoP) is popular in image recognition, we employ its basic element-wise variant in our pipeline. The goal of multi-level feature design is to extract feature representations at different layer-wise levels of CNN, realizing several levels of visual abstraction to achieve robust few-shot learning. As SoP can handle convolutional feature maps of varying spatial sizes, we also introduce image inputs at multiple spatial scales into MlSo. To exploit the discriminative information from multi-level and multi-scale features, we develop a Feature Matching (FM) module that reweights their respective branches. We also introduce a self-supervised step, which is a discriminator of the spatial level and the scale of abstraction.
Our pipeline is trained in an end-to-end manner. With a simple architecture, we demonstrate respectable results on standard datasets such as Omniglot, \textit{mini--}ImageNet, \textit{tiered--}ImageNet, Open MIC, fine-grained datasets such as CUB Birds, Stanford Dogs and Cars, and action recognition datasets such as HMDB51, UCF101, and \textit{mini--}MIT.}
\end{abstract}

\begin{IEEEkeywords}
Few-shot Learning, Second-order Statistics, Image Classification, Action Recognition
\end{IEEEkeywords}

%
\IEEEpeerreviewmaketitle

\vspace{-0.15cm}
\section{Introduction}
\label{sec:intro}
Convolutional Neural Networks (CNNs) have advanced a variety of models \eg,  object category recognition,  scene classification and fine-grained image recognition. However, CNNs rely on large numbers of training labeled images and cannot be easily adapted to new tasks given very few samples. 
In contrast, the ability of humans to  learn new visual concepts from very few examples highlights the superiority of biological vision. Thus, researchers study the so-called few-shot learning paradigm for which networks are trained or adapted to new concepts with few training samples. For  example, recent few-shot learning approaches  \cite{vinyals2016matching,snell2017prototypical,sung2017learning,NIPS2017_7082} build on the notion of similarity learning \cite{metric_old,kissme,Mehrtash_CVPR_2018}. In this paper, we study the one- and few-shot learning problems, and we focus on a simple design capturing robust statistics for the purpose of similarity learning.

In what follows, we employ second-order statistics of datapoints, which have advanced the performance of numerous methods, including object recognition, texture categorization, action representation, and tracking  \cite{tuzel_rc,porikli2006tracker,wang2011tracking,guo2013action,carreira_secord,me_tensor_tech_rep}. For example, in the popular region covariance descriptors~\cite{tuzel_rc}, a covariance matrix computed over multimodal features from image regions is used as an object representation for recognition and tracking. Covariance descriptors have been extended to many other applications~\cite{tuzel_rc,porikli2006tracker,wang2011tracking,guo2013action} including end-to-end training of CNNs, leading to state-of-art results on action recognition, texture classification, scene and fine-grained recognition \cite{koniusz2016tensor,koniusz2017domain,koniusz2018deeper}. 
As second-order representations capture correlation patterns of features, they are a powerful tool used in several recognition pipelines \cite{face_cooc,deep_cooc,bilinear_finegrained,koniusz2017higher,koniusz2018deeper,lin2018o2dp,saimunur_redro,maxexp,hosvd,sice}. 


A typical few-shot learning network consists of a backbone that generates image features, and a base learner that learns to classify the so-called query images (\cf class labels). In this paper, we use a multi-level network to obtain multiple levels of feature abstraction based on second-order features. We leverage intermediate outputs from the backbone, which helps the pipeline capture relations between the query and support images at multiple levels of abstraction. We note that a cascaded network was used before by GoogLeNet \cite{szegedy_google} with the goal of image classification rather than the similarity learning in the few-shot regime, which is a novel learning scenario not explored before.

By analyzing the class-wise activation maps, we ascertain that the features extracted from different levels of the backbone generally describe objects with respect to their different visual properties. Thus, such complementary to each other activation maps improve modeling of object relations across different levels of abstraction. To this end, we leverage second-order statistics formed from features of multi-level network streams. Firstly, we form and pass such second-order representations via the so-called Power Normalization (PN) to prevent the so-called burstiness effect \cite{koniusz2018deeper}, a statistical uncertainty of feature counts. {As second-order pooling can effectively process feature maps of different spatial resolutions, we also employ inputs at multiple spatial scales to improve the quality of matching between objects at various scales. Secondly, we apply a so-called Feature Matching module which determines the importance of each level of abstraction  and scale per query-support pair. Subsequently, we form relationship descriptors and pass them to a so-called base learner which learns the similarity by comparing relationship descriptors (representing query-support pairs) via the Mean Square Error (MSE) loss}. Finally, we refine the multi-level second-order matrices, each corresponding to some level of abstraction in the multi-level network, by applying a  self-supervised pretext task \cite{gidaris2019boosting,he2020momentum,chen2020simple} with the level and scale indexes used as auxiliary labels. Such a self-supervised step  helps the multi-level network learn more distinctive and complementary abstraction responses. 

We apply our network to the few-shot image and action recognition tasks. In contrast to the large-scale object classification, few-shot learning  requires an investigation into the effective use of multiple levels of feature abstraction, combined with second-order relationship descriptors, to determine the best performing architecture. As second-order statistics require appropriate pooling for such a new problem, we employ PN which is known to act as detector of visual features. PN discards the so-called nuisance variability (burstiness), the uncertainty of the frequency of specific visual features which vary unpredictably from image to image of the same class \cite{koniusz2018deeper}. We speculate that, as we capture relationships between multiple images in a so-called episode, such a nuisance variability would be multiplicative \wrt the number of images per episode. PN limits such a harmful effect. 
%
%

\vspace{0.25cm}
Our contributions are summarized below:
\vspace{0.1cm}
\renewcommand{\labelenumi}{\roman{enumi}.}
\begin{enumerate}[leftmargin=0.5cm]
\item {We propose to generate scale-wise second-order representations at multiple levels of abstraction via a multi-level network, and we introduce the Feature Matching (FM) module to reweight the importance of each abstraction level and scale. FM selects the most discriminative pairs for relation learning. We show the importance of reweighting and matching across multiple spatial scales at multiple levels of feature abstraction.}
\vspace{0.15cm}
\item We investigate how to build second-order relational descriptors from feature maps to capture the similarity between query-support pairs for few-shot learning.
\vspace{0.15cm}
\item {We develop a self-supervised discriminator acting on scale-wise second-order representations of multiple levels of abstraction whose role is to predict the index of abstraction and scale, thus improving the complementarity and distinctiveness between different abstraction levels and scales}.
\end{enumerate}

\vspace{0.15cm}
To the best of our knowledge, we are the first to investigate multiple levels of feature abstraction, multiple input scales, and self-supervision for one- and few-shot learning. In this work, we build upon the SoSN model \cite{sosn}, the first few-shot learning model successfully leveraging second-order pooling. 
\section{Related Work}
\label{sec:related}

Below, we describe popular  one- and few-shot learning models, and discuss other related topics.

\vspace{-0.2cm}
\subsection{Learning From Few Samples}
\label{sec:related_few_shot}

The ability of {\em`learning quickly from only a few examples is definitely the desired characteristic to emulate in any brain-like system'} \cite{book_nip}. This desired principle poses a challenge to CNNs which typically leverage large-scale datasets \cite{ILSVRC15}. Current trends in computer vision highlight the need for the {\em `ability of a system to recognize and apply knowledge and skills learned in previous tasks to novel tasks or new domains, which share some commonality'}. 
For one- and few-shot learning, a robust `{\em transfer of particle}', introduced in 1901 by Woodworth \cite{woodworth_particle}, is also a desired mechanism because generalizing based on one or few datapoints to account for intra-class variability of thousands images is formidable.

\vspace{0.05cm}
\noindent{\textbf{One- and Few-shot Learning (FSL)}} have been studied in computer vision in both shallow \cite{NIPS2004_2576,BartU05,fei2006one,lake_oneshot} and deep learning scenarios \cite{koch2015siamese,vinyals2016matching,snell2017prototypical,finn2017model,sung2017learning,zhu2020attribute,huang2020low,hu2020learning,phaphuangwittayakul2021fast,r1-1,r1-2,KTN,TAML,FLAT}. 
%
For brevity, we review only the deep learning techniques. Siamese Network \cite{koch2015siamese}, a CNN based on two-streams, generates image descriptors and learns the similarity between them. 
Matching Network \cite{vinyals2016matching} introduces the concept of the support set, and the $L$-way $Z$-shot learning protocols. Matching Network  captures the similarity between a query and several support images, and generalizes well to previously unseen  test classes. 
Prototypical Networks \cite{snell2017prototypical} learn a model that computes distances between a datapoint and prototype representations of each class. 
Model-Agnostic Meta-Learning (MAML) \cite{finn2017model} and Task-Agnostic Meta-Learning (TAML) 
perform a rapid adaptation to new tasks via meta-learning. 
Moreover, a large family of meta-learning approaches apply some form of the gradient correction \eg, 
Meta-SGD \cite{Li2017MetaSgd}, MAML++ \cite{Antoniou2019Htmaml}, Reptile \cite{Nichol2018Reptile}, CAVIA \cite{Zintgraf2019Cavia} LEO \cite{Rusu2019LEO} and ModGrad~\cite{Christian2020ModGrad} adapt the step-size of the gradient updates. 
Relation Net \cite{sung2017learning}  learns the relationship between query and support images by leveraging a similarity learning network wired with  the backbone. Note that relationship learning in few-shot learning is closely related to metric learning \cite{metric_old,kissme} rather than relationship learning in graphs \cite{tang2009relational,Wang_2020}. 
SalNet \cite{zhang2019few} is an efficient saliency-guided end-to-end meta-hallucination approach. 
{AFL \cite{zhu2020attribute} proposes a novel attribute-guided two-layer  learning framework to improve the generalized performance of image representations. LRPABN \cite{huang2020low} uses an effective low-rank pairwise bilinear pooling operation to capture the nuanced differences between images. FAML \cite{phaphuangwittayakul2021fast} proposes a novel GAN-based few-shot image generation approach, which is capable of generating new realistic images for unseen target classes in the low-sample regime. 
Zhu \etal \cite{zhu2020one} propose a novel global grouping metric to incorporate the global context, resulting in a per-channel modulation of local relation features. Moreover, a cross-modal retrieval can be performed by a modified meta-learning framework \cite{meta_retrieval}.} Finally, Graph Neural Networks (GNN) \cite{kipf2017semi,felix2019icml,johannes2019iclr,uai_ke,hao2021iclr} have also been used in  few-shot learning  \cite{gnn,Kim_2019_CVPR,Gidaris_2019_CVPR,wang20213d}, and achieved competitive results. 
{Self-supervision has also been studied in few-shot learning \cite{gidaris2019boosting,su2020does,FLAT,arl} where transformation-based auxiliary self-supervised classifiers are employed to improve the robustness  of few-shot learning models. For instance, FLAT \cite{FLAT} uses a self-supervision strategy to  pre-train the auto-encoder via the reconstruction of transformations applied to images, followed by training with the supervised loss.
 }

Our pipeline is somewhat similar to Relation Net \cite{sung2017learning} and Prototypical Networks \cite{snell2017prototypical} in that we use the two basic building blocks underlying such approaches, that is, the feature encoder (or backbone) and the similarity learning network. However, Relation Net and Prototypical Net are first-order models which do not use multiple levels of feature abstraction. In contrast, we investigate second-order representations with PN to capture correlations of features. We also use multiple intermediate feature outputs to obtain different levels of feature abstraction. Our work builds on our SoSN model \cite{sosn},  which we extend in this work by adding multiple levels of feature abstraction, multi-scale inputs, a feature matching mechanism, and a self-supervision step. Finally, we also include an extension of our pipeline to the problem of few-shot action recognition. 

\comment{
\vspace{0.05cm}
\noindent{\textbf{Zero-shot Learning}} can be implemented within the similarity learning frameworks which follow \cite{koch2015siamese,vinyals2016matching,snell2017prototypical,sung2017learning,guo2020novel,yao2021attribute,gao2020ci}.  
Methods such as Attribute Label Embedding (ALE) \cite{akata2013label} use attribute vectors as label embedding and an objective inspired by a structured WSABIE ranking method, which assigns more importance to the top of the ranking list. 
Zero-shot Kernel Learning (ZSKL) \cite{zhang2018zero} proposes a non-linear kernel method which realizes the compatibility function. The weak incoherence constraint is applied in this model to make the columns of projection matrix incoherent. 
%
Feature Generating Networks \cite{xian2017feature} use GANs to generate additional training feature vectors for unobserved classes. 
Approach \cite{zhang2018model} uses a model selection to distinguish 
seen and unseen classes, and then it applies a separate classifier to each group. 
}

\vspace{0.05cm}
\noindent{\textbf{Few-shot Action Recognition (FSAR)}} has been studied by the limited number of recent works \cite{mishra2018generative,guo2018neural,xu2018dense,cmn,arn,wang20213d}. Though action recognition \cite{r1-3,r1-4,r1-5} has been studied for a long time, it remains a challenging problem under the few-shot setting. 
Mishra \etal \cite{mishra2018generative} propose a generative framework for zero- and few-shot action recognition, by modeling each action class by a probability distribution. Guo \etal \cite{guo2018neural} leverage neural graph matching to learn to recognize previously unseen 3D action classes. Xu \etal \cite{xu2018dense} propose a dilated network to simultaneously capture local and long-term spatial temporal information. Zhu \etal \cite{cmn} propose a novel compound memory network. {Hu \etal \cite{hu2020learning} learn a dual-pooling GNN to improve the discriminative ability for selecting the representative video content and refine video relations. Finally, noteworthy are few-shot  pipelines such as VideoPuzzle \cite{VideoPuzzle} which generates aesthetically enhanced long-shot videos from short video clips, and JEANIE \cite{wang20213d} which performs FSAR on datasets of articulated human 3D body joints.}


\subsection{Second-order Statistics/Power Normalization}
\label{sec:related_pn}

Below we discuss several shallow and CNN-based methods which use second-order statistics. We conclude with details of so-called pooling and Power Normalization.

\vspace{0.05cm}
\noindent{\textbf{Second-order statistics }}
have been used for texture recognition \cite{tuzel_rc, elbcm_brod} by so-called Region Covariance Descriptors (RCD) 
and further  applied to tracking \cite{porikli2006tracker}, semantic segmentation \cite{carreira_secord} and object category recognition \cite{me_tensor_tech_rep,koniusz2017higher}. 
Co-occurrence patterns have also been used in the CNN setting. 
A recent approach \cite{deep_cooc} extracts feature vectors at two separate locations in feature maps to perform an outer product in a CNN co-occurrence layer. Higher-order statistics have also been used for action recognition from the body skeleton sequences \cite{koniusz2016tensor,hosvd} and for domain adaptation \cite{koniusz2017domain}. 
%
In this work, we perform end-to-end training for one- and few-shot learning by the use of second-order relation descriptors (a novel proposition) that capture relations between the query and support images (or videos) before passing them to the similarity learning network. 
%
%
Second-order statistics have to deal with the so-called burstiness,  `{\em the property that a given visual element appears more times in an image than a statistically independent model would predict}' \cite{jegou_bursts}. This is achieved by Power Normalization~\cite{me_ATN,me_tensor_tech_rep} which is known to suppress this burstiness. 
PN has been extensively studied and evaluated in the context of Bag-of-Words \cite{me_ATN,me_tensor_tech_rep,koniusz2017higher} and category recognition with deep learning \cite{koniusz2018deeper,maxexp}. 

A theoretical relation between Average and Max-pooling was studied in~\cite{boureau_midlevel}, which highlighted the underlying statistical reasons for the superior performance of max-pooling. 
A survey \cite{me_ATN} showed that so-called MaxExp  pooling in~\cite{boureau_pooling} acts as a detector of `\emph{at least one particular visual word being present in an image}', and thus it can be approximated with a simple $\text{min}_n(1, \eta\phi_{kn})$ for $\eta\!>\!0$, whose variant we use in this paper.  

\section{Background}
Below we detail our notations, and explain how to compute second-order statistics with 
PN. 

\vspace{0.05cm}
\noindent\textbf{Notations.} 
Let $\vx\in\mbr{d'}$ be a $d'$-dimensional feature vector. $\idx{N}$ stands for the index set $\set{1, 2,\cdots,N}$. 
We also define $\vOnes\!=\![1,\cdots,1]^T$. Operators $;_1$, $;_2$ and $;_3$ denote concatenation of tensors along the first, second and third mode, respectively. Operator $(:)$ denotes vectorization of a matrix or tensor. Typically, capitalized bold symbols such as $\mPhi$ denote matrices, lowercase bold symbols such as $\vphi$ denote vectors, and regular symbols such as $n$ or $Z$ denote scalars. Also, $\Phi_{ij}$ is the $(i,j)$-th coefficient of $\mPhi$.

\vspace{0.05cm}
\noindent\textbf{Autocorrelation matrices.} 
\label{sec:som}
The linearization of sum of Polynomial kernels results in two  autocorrelation matrices.
\begin{proposition}
\label{pr:linearize}
Let $\mPhi_A\equiv\{\vphi_n\}_{n\in\tNnb_{\!A}}$, $\mPhi_B\!\equiv\{\vphi^*_n\}_{n\in\tNnb_{\!B}}$ be datapoints from two images $\Pi_A$ and $\Pi_B$, where $N\!=\!|\tNnb_{\!A}|$ and $N^*\!\!=\!|\tNnb_{\!B}|$ are the numbers of data vectors \eg, obtained from the last convolutional feature map of CNN for images $\Pi_A$ and $\Pi_B$. Autocorrelation matrices emerge from the linearization of the sum of Polynomial kernels of degree $2$:
\vspace{-0.1cm}
\begin{align}
& K(\mPhi_A, \mPhi_B)\!=\!\left<\mF(\mPhi_A),\mF(\mPhi_B)\right>\!=\!\frac{1}{NN^*\!}\!\!\sum\limits_{
n\in \tNnb_{\!A}}\sum\limits_{n'\!\in \tNnb_{\!B}\!}\left<\vphi_n, \vphi^*_{n'}\right>^2\!, \nonumber \\
&\quad \text{ where }\;\;\mF(\mPhi)\!=\!\frac{1}{N}\sum\limits_{n\in \tNnb}\vphi_n\vphi_n^T.\!\! 
\label{eq:hok1}
\end{align}
\end{proposition}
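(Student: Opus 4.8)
The plan is to reduce the double sum of squared scalar products in \eqref{eq:hok1} to a single Frobenius inner product between two autocorrelation matrices, exploiting the elementary fact that squaring the dot product of two vectors equals the matrix inner product of their rank-one outer products. Since the matrix (Frobenius) inner product is bilinear, once this pointwise identity is in hand the two summations over $\tNnb_{\!A}$ and $\tNnb_{\!B}$ can be pushed inside the inner product and absorbed into the definitions of $\mF(\mPhi_A)$ and $\mF(\mPhi_B)$.

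First I would establish the key identity. Writing $\phi_{n,i}$ for the $i$-th coordinate of $\vphi_n$, expanding the square of the scalar product gives
\begin{align}
\left<\vphi_n, \vphi^*_{n'}\right>^2
&= \Big(\sum_i \phi_{n,i}\,\phi^*_{n',i}\Big)^2 \nonumber \\
&= \sum_{i,j} \phi_{n,i}\phi_{n,j}\,\phi^*_{n',i}\phi^*_{n',j} \nonumber \\
&= \left<\vphi_n\vphi_n^T,\; \vphi^*_{n'}(\vphi^*_{n'})^T\right>,
\end{align}
where the last bracket is the Frobenius product $\left<\mU,\mV\right>\!=\!\trace(\mU^T\mV)\!=\!\sum_{i,j}U_{ij}V_{ij}$. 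The last equality follows by recognizing that the $(i,j)$ entry of the outer product $\vphi_n\vphi_n^T$ is $\phi_{n,i}\phi_{n,j}$, so the double coordinate sum is exactly the entrywise product of the two outer-product matrices.

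Having this, I would substitute it into the right-hand side of \eqref{eq:hok1} and invoke bilinearity of the Frobenius product to move the two normalized sums inside the respective arguments. The first argument then becomes $\frac{1}{N}\sum_{n\in\tNnb_{\!A}}\vphi_n\vphi_n^T$ and the second $\frac{1}{N^*}\sum_{n'\in\tNnb_{\!B}}\vphi^*_{n'}(\vphi^*_{n'})^T$, which are precisely $\mF(\mPhi_A)$ and $\mF(\mPhi_B)$ by the definition in \eqref{eq:hok1}. This delivers $\left<\mF(\mPhi_A),\mF(\mPhi_B)\right>$ and closes the chain of equalities.

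I do not expect a genuine analytic obstacle here; the entire content is the algebraic identity above, and the rest is bookkeeping. The only points that demand care are notational: keeping the normalizers $1/N$ and $1/N^*$ attached to the correct sums so that they recombine into the averaged matrices $\mF(\cdot)$, and being explicit that the bracket $\left<\cdot,\cdot\right>$ silently changes meaning from a vector scalar product (under the square) to the matrix Frobenius product (between the autocorrelation matrices).
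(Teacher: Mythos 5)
Your proof is correct and is exactly the standard linearization argument: the paper itself does not spell out a derivation (it defers to the cited references \cite{me_tensor_tech_rep,koniusz2017higher}), and the rank-one identity $\left<\vphi,\vphi^*\right>^2=\left<\vphi\vphi^T,\vphi^*\vphi^{*T}\right>$ followed by bilinearity of the Frobenius product is precisely the derivation those references contain. No gaps; your closing remark about the bracket changing meaning from vector dot product to Frobenius product is the right point of care.
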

\begin{proof}
See  \cite{me_tensor_tech_rep,koniusz2017higher} for a derivation of this expansion.
\end{proof}


\noindent\textbf{Power Normalization.} In what follows, we use 
%
\comment{
\begin{align}
& \!\!\!\!\!\!\!\frac{1}{NN^*\!}\!\!\sum\limits_{
n\in \tNnb_{\!A}}\sum\limits_{n'\!\in \tNnb_{\!B}}\!\!\!\left<\vphi_n, \vphi^*_{n'}\!\right>^2\!\!=\!
\Big\langle\frac{1}{N}\sum\limits_{
n\in \tNnb_{\!A}}{\vphi_n\vphi_n^T}, \frac{1}{N^*\!}\sum\limits_{
n\in \tNnb_{\!B}}{\vphi^*_{n'}{\vphi^*_{n'}}^{\!\!\!T}}\Big\rangle.\!\!\label{eq:hok2}
\end{align}
}
autocorrelation matrices 
%
%
with a pooling operator 
\comment{
will be replaced by various Power Normalization functions. This yields a (kernel) feature map{\color{red}\footnotemark[3]}:
\begin{align}
& \!\!\!\mPsi\left(\{\vphi_n\}_{n\in\tNnb}\right)=\tG\Big(\frac{1}{N}\sum_{n\in\mathcal{N}}\!\vphi_n\vphi_n^T\Big)=\tG\Big(\frac{1}{N}\mPhi\mPhi^T\Big).\label{eq:hok3}
\end{align}
\vspace{-0.5cm}
}
%
%
%
related to the following two propositions.

\begin{proposition}
\label{pr:cooc}
Assume two event vectors $\vphi,\vphi'\!\!\in\!\{0,1\}^{N}$ which store the $N$ trials each, performed according to the Bernoulli distribution under the i.i.d. assumption. 
Let $p$ be the probability of an event $(\phi_{n}\!\wedge\!\phi'_{n}\!=\!1)$ denoting a co-occurrence, and $1\!-\!p$ of $(\phi_{n}\!\wedge\!\phi'_{n}\!=\!0)$ denoting the lack of co-occurrence. Let  $p\!=\!\avg_n\phi_n\phi'_{n}$ denote an expected value. Then the probability of at least one co-occurrence event $(\phi_{n}\!\wedge\!\phi'_{n}\!=\!1)$ in $\phi_n$ and $\phi'_n$ simultaneously in $N$ trials becomes $\psi\!=\!1\!-\!(1\!-\!p)^{\eta}$ for $\eta\!=\!N$.
\end{proposition}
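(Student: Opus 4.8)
The plan is to compute the requested probability as the complement of the event that \emph{no} co-occurrence occurs in any of the $N$ trials. First I would fix the per-trial model: for each index $n\in\idx{N}$ the conjunction $(\phi_n\wedge\phi'_n\!=\!1)$ is a single Bernoulli trial whose success probability is $p$, so its failure---the absence of a co-occurrence, $(\phi_n\wedge\phi'_n\!=\!0)$---has probability $1-p$. Identifying this $p$ with $\avg_n\phi_n\phi'_n$ is simply the statement that $p$ is the expected value (empirical frequency) of the indicator $\phi_n\phi'_n$, so the two readings of $p$ in the hypothesis coincide.

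Next I would invoke independence. Under the i.i.d.\ assumption the $N$ trials are mutually independent and share the common parameter $p$, so the joint probability that all $N$ trials fail factorizes into a product of identical terms:
\begin{align}
\Pr\!\left(\bigwedge_{n\in\idx{N}}(\phi_n\wedge\phi'_n\!=\!0)\right)=\prod_{n\in\idx{N}}(1-p)=(1-p)^{N}. \nonumber
\end{align}
The event ``at least one co-occurrence'' is precisely the negation of ``no co-occurrence in any trial,'' so by the complement rule its probability is $\psi=1-(1-p)^{N}$. Substituting $\eta=N$ yields the claimed form $\psi=1-(1-p)^{\eta}$.

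The computation itself is elementary; the points that require care are conceptual rather than technical. The first is to make explicit that the conjunction $\phi_n\wedge\phi'_n$ defines a \emph{single} derived Bernoulli variable per trial, so that the whole problem collapses to $N$ independent draws with success probability $p$. The second---and the step I would state most carefully---is the independence claim: the factorization into $(1-p)^{N}$ relies on the trials being independent \emph{and} identically distributed with the shared parameter $p$, which is exactly what the i.i.d.\ hypothesis supplies. With these two observations in place, the complement argument closes the proof immediately, so I do not anticipate a genuine obstacle beyond stating the probabilistic assumptions precisely.
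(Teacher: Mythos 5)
Your proof is correct and is essentially the argument the paper relies on: the paper gives no inline derivation, deferring instead to \cite{koniusz2018deeper}, where the result is obtained by exactly this complement-rule computation --- treating $\phi_n\wedge\phi'_n$ as a single Bernoulli trial with success probability $p$ and using the i.i.d.\ assumption to factorize the no-co-occurrence event into $(1-p)^N$. Your explicit remark that the empirical average $p=\avg_n\phi_n\phi'_n$ is being identified with the Bernoulli parameter is a sensible clarification of a point the statement leaves implicit.
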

\begin{proof}
See \cite{koniusz2018deeper} for a proof.
\end{proof}
\begin{proposition}
\label{pr:axmin}
The first-order Taylor expansion of $1\!-\!(1\!-\!p)^{\eta}$ around $p\!=\!0$ and $p\!=\!1$ equals $\eta p$ and $1$, respectively. Thus, we have $1\!-\!(1\!-\!p)^{\eta}\leq\text{min}(\eta p, 1)$ on $p\!\in\![0;1]$. If we treat coefficients $(i,j)$ of matrix $\mM\!=\!{\mPhi\mPhi^T}\!/{\trace(\mPhi\mPhi^T)}$ as approximately proportional to the co-occurrence probability of $\phi_{in}$ and $\phi_{jn}, \forall n\!\in\!\idx{N}$, then for $\mPhi\!\geq\!0$, we obtain PN maps $\mPsi(\mM; \eta)\!=\!\text{min}(\eta\mM, 1)$, where $\eta\!\approx\!N$ is an adjustable parameter accounting for the fact that we do not operate on the actual variable $p$ drawn according to the Bernoulli distribution.
\end{proposition}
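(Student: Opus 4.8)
The plan is to dispatch the three analytic claims by elementary calculus and then attach the probabilistic matrix interpretation. Write $f(p)=1-(1-p)^{\eta}$ and assume $\eta\geq 1$. First I would compute $f'(p)=\eta(1-p)^{\eta-1}$, which gives $f(0)=0$, $f'(0)=\eta$ and hence the first-order expansion $\eta p$ about $p=0$, and $f(1)=1$, $f'(1)=0$ (for $\eta>1$) and hence the constant expansion $1$ about $p=1$. For the inequality $f(p)\leq\text{min}(\eta p,1)$ the key observation is concavity: $f''(p)=-\eta(\eta-1)(1-p)^{\eta-2}\leq 0$ on $[0,1]$ whenever $\eta\geq 1$, so $f$ lies below each of its tangent lines. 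The tangent at $p=0$ yields $f(p)\leq\eta p$ (equivalently the Bernoulli inequality $(1-p)^{\eta}\geq 1-\eta p$), and the tangent at $p=1$ yields $f(p)\leq 1$; intersecting these two bounds gives $f(p)\leq\text{min}(\eta p,1)$ on the whole interval.

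For the matrix map I would first verify that the surrogate probabilities $M_{ij}$ really lie in $[0,1]$. Since $\mPhi\geq 0$, every entry of $\mPhi\mPhi^T$ is nonnegative, and the trace normalization forces $\sum_i M_{ii}=1$ with each $M_{ii}\geq 0$, so $M_{ii}\in[0,1]$; because $\mM$ is positive semidefinite, Cauchy--Schwarz gives $M_{ij}\leq\sqrt{M_{ii}M_{jj}}\leq 1$, so indeed $M_{ij}\in[0,1]$ and may be read as a co-occurrence probability. Substituting $p\approx M_{ij}$ into Proposition \ref{pr:cooc} yields the ideal detector response $1-(1-M_{ij})^{\eta}$ per coefficient, which by the inequality above is tightly bounded by $\text{min}(\eta M_{ij},1)$; applying this elementwise produces the stated map $\mPsi(\mM;\eta)=\text{min}(\eta\mM,1)$. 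The value $\eta\approx N$ is inherited from the $\eta=N$ of Proposition \ref{pr:cooc}, relaxed to a tunable parameter precisely because $M_{ij}$ is proportional to, rather than equal to, the true Bernoulli probability.

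The analytic steps are routine, and I do not expect the inequality itself to be the obstacle. The genuinely delicate point is the last one: justifying that the normalized Gram entries behave as Bernoulli co-occurrence probabilities and that replacing the exact trial count $N$ by a free $\eta$ is legitimate. This is an interpretive bridge from the exact model of Proposition \ref{pr:cooc} to the surrogate $\mM$ rather than a rigorous identity, so I expect the argument to lean on the approximation $\eta\approx N$ and the elementwise application rather than on any further hard estimate.
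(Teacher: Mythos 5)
Your proof is correct and follows essentially the route the paper intends; in fact the paper supplies no separate proof for this proposition (it is explicitly flagged as novel, with the Taylor expansions, the bound, and the probabilistic reading simply asserted within the statement itself), so your argument serves as a valid completion of that sketch. Your two additions --- deriving $1-(1-p)^{\eta}\leq\min(\eta p,1)$ from concavity (equivalently the Bernoulli inequality) for $\eta\geq 1$, since the Taylor expansions alone do not imply the inequality, and verifying via the trace normalization and Cauchy--Schwarz on the positive semidefinite $\mM$ that its coefficients lie in $[0,1]$ --- are precisely the steps the paper's ``Thus'' leaves implicit, and both are sound.
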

%
%
%
\comment{
\begin{remark} 
\label{re:maxexp}
A practical variant of this pooling method \cite{koniusz2018deeper} is given by $\psi_{kk'}\!=\!1\!-\!(1\!-\!\avg_n\phi_{kn}\phi_{k'n})^{\eta}$, where $0\!<\!\eta\!\approx\!N$ is an adjustable parameter, $\phi_{kn}$ and $\phi_{k'n}$ are $k$-th and  $k'\!$-th features of an $n$-th feature vector $0\!\leq\!\vphi\!\leq\!1$ \eg, as defined in Prop. \ref{pr:linearize}, which is normalized to range 0--1.
\end{remark}
}
\comment{
\begin{remark}
\label{re:pn}
It was shown in \cite{koniusz2018deeper} that Power Normalization ({Gamma}) given by $\psi_{kk'}\!=\!(\avg_n\phi_{kn}\phi_{k'n})^\gamma$, for $0\!<\!\gamma\!\leq\!1$ being a parameter, is in fact an approximation of MaxExp. In matrix form, if $\mM\!=\!\frac{1}{N}\mPhi\mPhi^T\!$, matrix $0\!\leq\!\mPhi\!\leq\!1$ contains datapoints $\vphi_1,\cdots,\vphi_N$ as column vectors, then $\mPsi\!=\!\mygthree{\,\mM,\gamma\,}\!=\mM^\gamma$ (element-wise rising to the power of  $\gamma$). 
\end{remark}
}

\comment{
\begin{figure*}[t]
\centering
\includegraphics[width=0.9\linewidth]{}
\caption{\small Our second-order similarity network. The outer-product is applied to the convolutional feature vectors obtained from CNN feature maps which results in a second-order feature matrix, which is passed via our Power Normalization layer. Then the normalized second-order feature matrix is fed into the similarity network to learn the relation between support and query images. For few-shot action recognition, we adapt C3D FEN to extract the video representations.}
\label{fig:sosn}
\end{figure*}
}

\comment{
\begin{remark}
\label{re:asinhe}
It was shown in \cite{koniusz2018deeper} that AsinhE function is an extension of Gamma which acts on $\vphi\!\in\!\mbr{}$ rather than $\vphi\!\geq\!0$. It is defined as the Arcsin hyperbolic function:
\begin{align}
& \!\!\mPsi\!=\!\mygthree{\,\mM,\eta\,}\!=\arcsinh(\gamma'\!\mM)\!=\!\log(\gamma'\!\mM+\sqrt{1+{\gamma'}^2\!\mM^2}),
\end{align}
where $\mM\!=\!\frac{1}{N}\mPhi\mPhi^T\!$,  $\mPhi$ contains datapoints $\vphi_1,\cdots,\vphi_N$ as column vectors, param. $\gamma'$ corresponds to $\gamma$ in Remark \ref{re:pn}.
\end{remark}

\begin{remark}
\label{re:pnpn}
It was also shown in \cite{koniusz2018deeper} that the MaxExp operator can be extended to act on $\vphi\!\in\!\mbr{}$ rather than $\vphi\!\geq\!0$ by the use of Logistic a.k.a. Sigmoid ({\em SigmE}) functions:
\begin{align}
& \!\!\!\!\!\mPsi\!=\!\mygthree{\,\mM,\eta'\,}\!=\!\frac{2}{1\!+\!\expl{-\eta'\mM}}\!-\!1\text{ and }\frac{2}{1\!+\!\expl{\frac{-\eta'\mM}{\trace(\mM)+\lambda}}}\!-\!1,\!
\label{eq:sigmoid}
\end{align}
where $\mM$ is def. as in Remark \ref{re:asinhe}, parameter $\eta'$ corresponds to $\eta$ in Remark \ref{re:maxexp}, $\lambda\!\approx\!1e\!-\!6$ is a small regularization constant and $\trace(\mM)$ prevents input values from  exceeding one.
\end{remark}
}

\noindent We adopt the above PN operator as it works well in practice according to survey \cite{me_ATN}. Proposition \ref{pr:axmin} is a novel proposition. 


%
\section{Pipeline}
{Below, we describe our network followed by our relationship descriptors whose role is to capture co-occurrences in the image and video representations. We also detail the Feature ENcoder (FEN) with multiple levels of abstraction outputs, the Feature Matching (FM) module and the self-supervised Visual Abstraction Level and Scale Discriminator.} 

\begin{figure*}[t]
	\centering
	\includegraphics[width=\linewidth]{}
	\caption{\small The pipeline of our Multi-level Second-order (MlSo) few-shot learning. Convolutional feature maps corresponding to inputs at multiple spatial scales are extracted at multiple levels of the CNN Feature Encoder to perform advanced matching. At each level, we apply second-order pooling with the Matching Module before passing these scale-wise representations to the base learner, with the MSE loss per level of visual abstraction applied, and the Visual Abstraction Level and Scale Discriminator.}
	\label{fig:md_sosn}
\end{figure*}

\begin{figure}[t]
	\centering
	\includegraphics[width=0.8\linewidth]{}
	\caption{\small For Few-shot Action Recognition, we use the C3D convolutional blocks and the second-order pooling which aggregates over the temporal mode (in addition to spatial locations of feature maps).}
	\label{fig:fsar}
\end{figure}

\begin{figure*}[h]
	\centering
	\includegraphics[width=0.9\linewidth]{}
	\caption{\small The architecture of FEN (Conv-4 and ResNet-12 backbones), SN base learner, Gate Module employed in our approach.}
	\label{fig:blocks}
\end{figure*}

\subsection{Multi-level Second-order Few-shot Learning}

Our Multi-level Second-order (MlSo) few-shot learning pipeline, shown in Figure \ref{fig:md_sosn}, consists of two major parts which are (i) the backbone (feature encoder, FEN) with multiple branches of features and (ii) the base learner \ie, Similarity Network (SN), Logistic Regression (LR) or Nearest Neighbor (NN) classifier. The role of the backbone is to generate convolutional feature vectors which are then used as image descriptors.  The base learner infers relations between query-support pairs. Our work differs from the Relation Net \cite{sung2017learning} in that we use multiple (\ie, three) levels of feature abstraction  passed to second-order representations with PN, which form query-support relation descriptors. We feed such descriptors into SN for comparison of second- rather than the first-order statistics. For clarity, we first describe the Second-order Similarity Network which uses a single FEN output from the last layer. Subsequently, we explain the details of MlSo.

\vspace{0.05cm}
\noindent{\textbf{SoSN}.} 
Let FEN be defined as an operator $f\!:\mbr{W\!\times\!H} \!\times\! \mbr{|\tF|}\!\shortrightarrow\!\mbr{K\!\times\!N}$, where $W$ and $H$ denote the width and height of an input image, $K$ is the length of feature vectors (number of filters), $N\!=\!N_W\!\cdot\!N_H$ is the total number of spatial locations in the last convolutional feature map. For simplicity, we denote an image descriptor by $\mPhi\!\in\!\mbr{K\!\times\!N}$, where $\mPhi\!=\!f(\mX; \tF)$ for an image $\mX\!\in\!\mbr{W\!\times\!H}$, and $\tF$ are learnable parameters of the encoding network. 
The role of SN, denoted by $r\!:\mbr{K'\!}\!\times\!\mbr{|\tR|}\!\shortrightarrow\!\mbr{}$, is to compare two datapoints encoded as some $K'$ dimensional vectorized second-order representations. Typically, we write $s(\vpsi; \tR)$, where $\vpsi\!\in\!\mbr{K'}\!$, whereas $\tR$ are learnable parameters of the similarity network. 
We define a relationship descriptor  $\vartheta\!:\mbr{K\!\times\!N\!\times\!Z}\!\times\! \mbr{K\!\times\!N}\!\shortrightarrow\!K'\!$ which captures some relationship between descriptors built from the $Z$-shot support images and a query image. This relationship is encoded via computing second-order autocorrelation matrices with PN for query and support embeddings, and forming some relation between query-support features \eg, by concatenation, inner-product, subtraction, \etc, as explained later.

\begin{figure*}[t]
    \centering
    \includegraphics[width=\linewidth]{}
    \caption{Our matching strategies for multiple spatial scales and multiple levels of feature abstraction. Firstly, we downsample the original support and query images (or videos) to 1/2 or 1/4 of the original resolution. Subsequently, we feed them into the Feature Encoder and construct scale-wise support-query pairs for relation learning. Cosine Matching, Gate Module, Optimal Transport and Graph Matching strategies are detailed in Section \ref{sec:mlrl}.}
    \label{fig:matching}
\end{figure*}

For the $L$-way $Z$-shot problem, assume that we have some support images $\{\mX_n\}_{n\in\mathcal{W}_l}$ from some set $\mathcal{W}_l$ and their corresponding image descriptors $\{\mPhi_n\}_{n\in\mathcal{W}_l}$ form   $Z$-shot relation descriptors.  Moreover, assume that we have one query image $\mX^*\!$ with its image descriptor $\mPhi^*$ (the asterisk usually denotes query-related variables). Both the $Z$-shot and the query embeddings belong to one of $L$ classes in the subset $\mathcal{C}^{\ddag}\!\equiv\!\{c_1,\cdots,c_L\}\!\subset\!\idx{C}\!\equiv\!\mathcal{C}$. Specifically, for an $L$-way problem, one obtains $L$ relation descriptors, where one of $L$ descriptors contains query-support pair of the same class, whereas the remaining $L\!-\!1$ relation descriptors contain non-matching query-support classes.  The $L$-way $Z$-shot learning step can be defined as learning the similarity \wrt relation descriptors:
\begin{align}
& \zeta_{lq}=r\left(\vartheta\!\left([\mPhi_n]_{n\in\mathcal{W}_l},\mPhi_{q}^*\right),\tR\right) \!,\\
&\text{ where }\; \mPhi_n\!=\!f(\mX_n; \tF) \;\text{ and }\; \mPhi^*_q\!=\!f(\mX^*_q\!; \tF),\nonumber
\end{align}
where operator $[\cdot]$ stacks matrices along the third mode, support subsets and query are $(\mathcal{W}_1,\cdots,\mathcal{W}_L,\,q)\in\mathcal{E}$, where $\mathcal{E}$ is a set of episodes. Each $\mathcal{W}_1$ shares the label with $q$, whereas $\mathcal{W}_2,\cdots,\mathcal{W}_L$ do not share the label with $q$.  
%
We use the Mean Square Error (MSE) for the objective of our end-to-end SoSN:
\begin{align}
\argmin\limits_{\tF, \tR} \expect\limits_{(\mathcal{W}_1,\cdots,\mathcal{W}_L,\,q)\sim\mathcal{E}}\sum\limits_{l\in\idx{L}} \left(\zeta_{lq}\! - \delta(c_l\!-\!c^*_q)\right)^2,
\label{eq:sosn1}
\end{align}
\setcounter{equation}{9}
\begin{table*}[t]
\centering
{
\fontsize{8.5}{9}\selectfont
\begin{tabular}[t]{c|c|c|c}
%
Operator &\begin{minipage}{3.5cm}{($\otimes$+F) Full single auto-correlation per support-query concatenated feature vectors.}\end{minipage} & \begin{minipage}{4.5cm}{($\otimes$+R) Rank differing auto-correlations, one per support, one per query, followed by the concatenation.}\end{minipage} & \begin{minipage}{4.5cm}($\otimes$) Auto-correlations, one per support, averaged over shots $\bar{\mPhi}$, one per query, followed by the concatenation.\end{minipage} \\
\hline

{\fontsize{6}{6}\selectfont
\vspace{4cm}
$\vartheta\!\left(\cdot\right)=$
\vspace{-4cm}
}

&{\begin{minipage}{4cm}
\includegraphics[width=4cm]{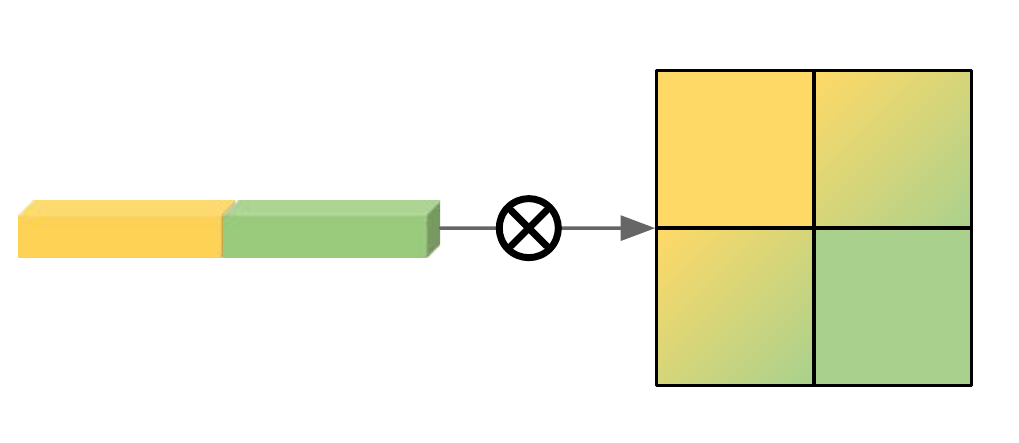}

{\fontsize{6}{6}\selectfont
\begin{equation}
\!\!\!\!\!\!\mPsi\left(\barbarbar{\mPhi}\barbarbar{\mPhi}^T/\trace(\barbarbar{\mPhi}\barbarbar{\mPhi}^T)\right)_{(:)}\nonumber
\end{equation}
\begin{equation}
\text{where }\barbarbar{\mPhi}\!=\!\big[\mPhi_n\,;_1\mPhi^*\big]_{n\in\mathcal{W}_l}\label{eq:concat_f}
\end{equation}
}
\end{minipage}
} &
{\begin{minipage}{4cm}
\includegraphics[width=4cm]{}

\vspace{-0.2cm}
{\fontsize{6}{6}\selectfont
\begin{equation}
\!\!\!\!\!\!\left[\!\!\!\!\!\!\!\substack{\mPsi\left(\barbar{\mPhi}\barbar{\mPhi}^T\!/\!\trace(\barbar{\mPhi}\barbar{\mPhi}^T)\right);_3\\ \qquad\mPsi\left(\mPhi^*\!\mPhi^{*T}\!/\!\trace(\mPhi^*\!\mPhi^{*T})\right)}\right]_{(:)}
\nonumber
\end{equation}
}
\vspace{-0.2cm}
{\fontsize{6}{6}\selectfont
\begin{equation}\!\!\!\!\!\!\!\!\!\!\text{where }\barbar{\mPhi}\!=\!\big[\mPhi_n\big]_{n\in\mathcal{W}_l}\label{eq:concat_r}
\end{equation}
}
\end{minipage}}
&
{\begin{minipage}{4cm}
\includegraphics[width=4cm]{}

{\fontsize{6}{6}\selectfont
\begin{equation}
\!\!\!\!\!\!\left[\!\!\!\!\!\!\!\substack{\mPsi\left(\bar{\mPhi}\bar{\mPhi}^T\!/\!\trace(\bar{\mPhi}\bar{\mPhi}^T)\right);_3\\ \qquad\mPsi\left(\mPhi^*\!\mPhi^{*T}\!/\!\trace(\mPhi^*\!\mPhi^{*T})\right)}\right]_{(:)}\nonumber
\end{equation}
\vspace{-0.2cm}
\begin{equation}
\text{where }\bar{\mPhi}\!=\!\!\!\!\sum_{n\in\mathcal{W}_l}\!\!\mPhi_n\label{eq:concat_best}
\end{equation}
}
\end{minipage}}\\
\hline
\end{tabular}
}
\caption{Proposed relationship descriptors $\vartheta$ used in relation learning. Note differences between  equations \eqref{eq:concat_f}, \eqref{eq:concat_r} and \eqref{eq:concat_best}.}\label{tab_methods}
\end{table*}

\vspace{-0.425cm}
\noindent where $c_l$ are labels of support subsets and the query label $c^*_q$ is always set as $c_1$ (so that the query image has the same label as the first support subset),  $\delta(c_l\!-\!c^*_q)$ equals 1 if $c_l\!=\!c^*_q$, 0 otherwise.

For FSAR, we propose a C3D Second-order Similarity Network (C3D SoSN) that is equipped with FEN with C3D convolutional blocks, as in Figure \ref{fig:fsar}. 
Firstly, we obtain C3D-based FEN embeddings over a video and then we compute the autocorrelation matrix with PN to obtain relation descriptors for query-support video pairs. We consider C3D SoSN as a baseline FSAR model. Thus, we do not investigate here any elaborate aggregation strategies, and we do not use the optical flow. However, we believe that further improvements over the  C3D SoSN baseline can be easily achieved.

\begin{figure*}[t]
	\centering
	\includegraphics[width=0.8\linewidth]{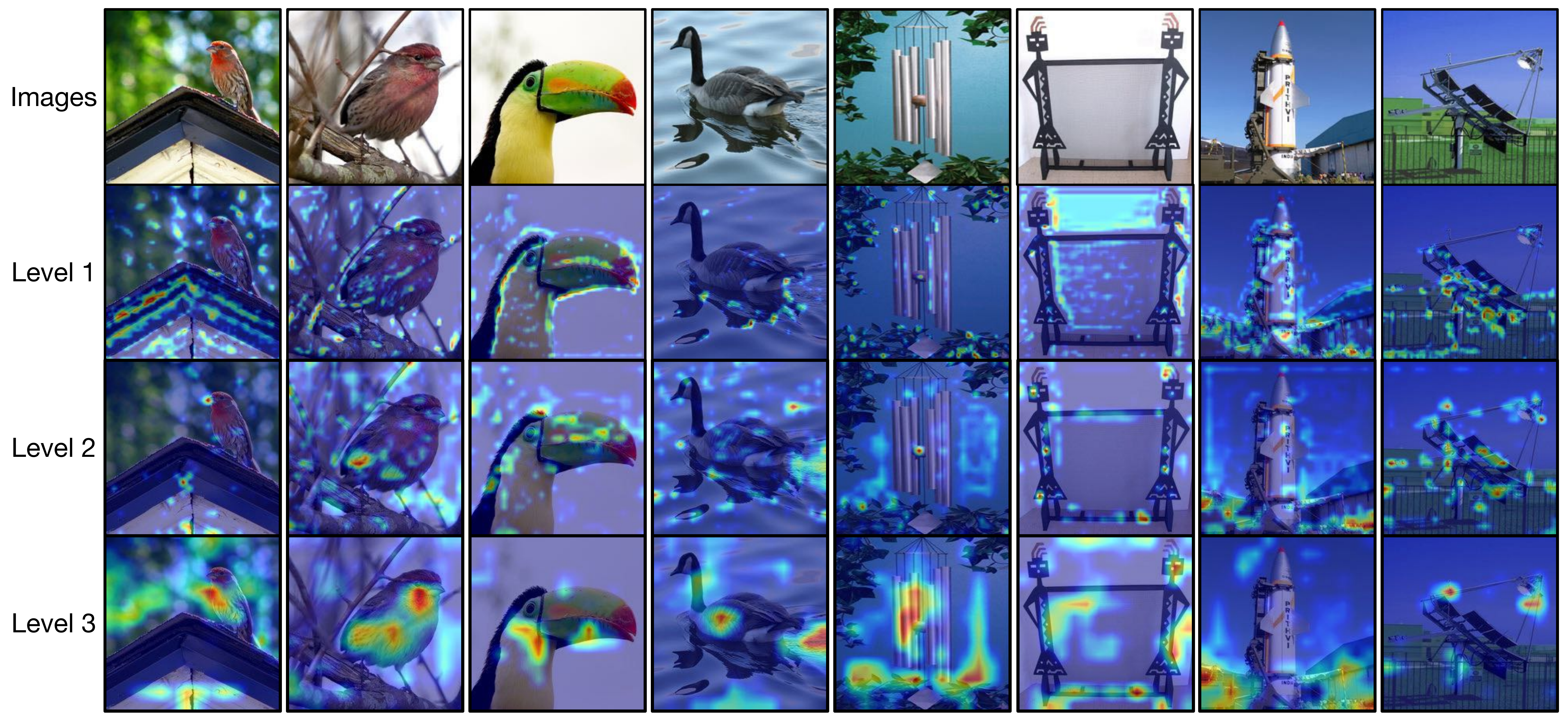}
	\caption{\small Visualization of features at several  levels of abstraction. We indicate the Gate Module scores (GM) in the right bottom corner per image. We use ResNet-12 FEN of MlSo to obtain features with the CAM \cite{cam} visualization model on images of \textit{mini}-ImageNet. The class-wise activation maps vary at different levels of abstraction. Given animals in rows 1--4, shallower features are localized at boundaries between the object and background. 
	More abstract features appear on salient fine-grained parts of animals. For images of aerospace airplanes and radars, shallower features come from objects, whereas more abstract features capture salient parts of objects or the context around them. Therefore, multiple levels of abstraction provide complementary descriptions of various visual concepts.} 
	\label{fig:cam}
\end{figure*}

\subsection{Relationship Descriptor \texorpdfstring{$\vartheta$}{V}}
Table \ref{tab_methods} offers some choices for the operator $\vartheta$ whose role is to capture and summarize the information stored in query-support embeddings of image or video pairs. Such a summary is then passed to SN for learning to compare query-support pairs. 
Operator ($\otimes$+F) concatenates the support and query feature vectors prior to the outer-product step. Operator ($\otimes$+R) performs the outer-product on the support and query feature vectors separately prior to concatenation step along the third mode. Operator ($\otimes$)  averages over embeddings of $Z$ support images from a given $\mathcal{W}_l$, followed by the outer-product step on the mean support and query vectors, and the concatenation step along the third mode. 

\subsection{Multi-level Relation Learning}
\label{sec:mlrl}

It is generally known that different layers of CNN are able to capture different kinds of visual abstraction \eg, early layers may respond to edges, blobs and texture patterns, whereas deeper layers may respond to parts of objects under some small deformations. Therefore, we propose the Multi-level Second-order Similarity Network whose goal is to learn the similarity between query-support image  or video pairs. To this end, we leverage the multiple levels of visual abstraction to learn object relations based on fine-to-coarse embeddings of images (or videos). 

The architecture of MlSo is shown in Figure \ref{fig:md_sosn}. In contrast to SoSN which uses the final output of FEN as embeddings, MlSo extracts feature vectors across fine-to-coarse levels that correspond to consecutive convolutional blocks, and forms autocorrelation matrices with PN and GM to produce the corresponding feature representations at multiple levels. Subsequently, an autocorrelation matrix per level is passed to SN with the goal of computing the relation scores which are fused to make the final prediction. 
Figure \ref{fig:cam}  visualizes  class activation maps at three different levels of FEN (ResNet-12 backbone) to demonstrate the complementary nature of activated regions. The figure shows that each level of visual abstraction in MlSo responds differently to the same visual stimulus.

Subsequently, we employ SN learners with parameters $\tR_d,\,d\in\idx{D}$, to learn the relations between query-support pairs for each level of abstraction. The $L$-way $Z$-shot learning step over $D$ levels of feature abstraction can be redefined as learning the similarity \wrt relation descriptors:
%
%
%
\setcounter{equation}{3}
\begin{align}
& \zeta^{(d)}_{lq}=r\left(\vartheta\!\left([\mPhi^{(d)}_n]_{n\in\mathcal{W}_l},\mPhi^{*(d)}_{q}\right),\tR_d\right), \\ 
& \quad \text{ where }\; \mPhi_n^{(d)}\!=\!f^{(d)}(\mX_n; \tF) \;\text{ and }\; \mPhi^{*(d)}_q\!=\!f^{(d)}(\mX^*_q\!; \tF), \nonumber
\end{align}
with the following MSE loss:
\begin{align}
\!\!\!&\argmin\limits_{\tF, \tR_1,\cdots,\tR_D} L_{R},\label{eq:sosn2}\\
\!\!\!&\;\text{ where } L_{R} =\!\!\!\!\!\expect\limits_{(\mathcal{W}_1,\cdots,\mathcal{W}_L,\,q)\sim\mathcal{E}}\sum\limits_{l\in\idx{L}}\sum\limits_{d\in\idx{D}} \left(\zeta^{(d)}_{lq}\! - \delta(c_l\!-\!c^*_q)\right)^2,\nonumber
\end{align}
and the label-related symbols are defined as for Eq. \eqref{eq:sosn1}. 
During the inference step, we average the individual classifier votes obtained from  multiple levels of feature abstraction. We determine the final class $c^*_q$  for a given  testing episode $(\mathcal{W}_1,\cdots,\mathcal{W}_L,\,q)\sim\mathcal{E}_{test}$ as follows:
\begin{align}
c^*_q\!=\!c_{l'} \;\text{ where \;} l'\!=\argmin\limits_{l\in\idx{L}} 
\sum\limits_{d\in\idx{D}} \left(S^{(d)}_{lq}-1\right)^2.
\label{eq:sosn3}
\end{align}
%


\comment{
\vspace{0.05cm}
\noindent\textbf{\hg Self Distillation (SD).}
{\hg To promote the consistency over similarity learning at different levels, we further propose to apply knowledge distillation between multiple levels to improve the overall accuracy. As the knowledge is distilled within the network, we call it Self Distillation (SD) in MlSo pipeline.

Following we demonstrate the SD term. Given support and query samples $x_n, n\in W_l$ and $x_q$, once we have their multi-level similarity scores $\zeta_{lq}^{(d)}, d\in\{1,...,D\}$, SD imposes cross-entropy loss between levels to act as teacher-student networks as following:

\begin{align}
    \argmin\limits_{\tF, \tR_1,\cdots,\tR_D} L_{SD} = \sum\limits_{d'}^{D} \sum\limits_{d''\neq d'}^{D} \frac{d'}{D}\zeta^{(d')}_{lq}\log \zeta^{(d'')}_{lq},
\end{align}
where $\frac{\max(d',d'')}{D^2}$ refers to the weight assigned to different distillation process, and it is higher when deeper level is involved in to the distillation. 

It can be seen that such self-distillation process encourages the predictions at different levels to be consistent, therefore improving the overall robustness of the pipeline at each level, and contributing to higher classification accuracy.
}
}

\vspace{0.1cm}
\noindent\textbf{Scale-wise Boosting (SB).}
{Below we demonstrate how the inputs at multiple spatial scales are utilized by our multi-level network. The use of scale-wise inputs is shown in Figure \ref{fig:md_sosn}. Given a pair of support and query images, $\mathbf{X}_n$ and $\mathbf{X}_q^*$, we first downsample them to obtain the multi-scale inputs, $\mathbf{X}_{n}^{(s)}$ 
and $\mathbf{X}_q^{*(s')}$. Then we feed them into the feature encoder to generate the convolutional features $\mPhi^{(s)}$ and $\mPhi^{*(s')}$, where $s$ and $s'$ are scale-wise indexes. Subsequently, we form the support-query relation descriptors at spatial scales $(s,s')$, pass them via function $r$, and obtain the scale-wise relation similarity scores $\zeta_{lq}^{(ss')}$:
\vspace{-0.5cm}
\begin{equation}
    \zeta_{lq}^{(ss')} = r(\vartheta(\mPhi^s_n,\mPhi^{*s'}_q), \tR),
\end{equation}
where $s,s'\in \idx{S}$ are the scale indexes ranging from $1$ to $3$, referring to inputs at 1, 1/2 and 1/4 of the original resolution. The number of scales we use is $S\!=\!3$.

\vspace{0.1cm}
As $\zeta_{lq}^{(ss')}$ is the similarity between $\mathbf{X}_n$ and $\mathbf{X}_q^*$ at  scales $s$ and $s'$,  the remaining part of the pipeline may deal with the scale-wise matching of objects in various ways. Note that the scale-wise matching deals with the scale variations of objects, whereas the feature abstraction relates to the level of semantic composition. Thus, both strategies may be combined to improve the few-shot learning step. 

Below is a simple extension of our pipeline to a variant with  multiple levels of abstraction and multiple spatial scales:
\begin{align}
& \zeta^{(d,ss')}_{lq}=r\left(\vartheta\!\left([\mPhi^{(d,s)}_n]_{n\in\mathcal{W}_l},\mPhi^{*(d,s')}_{q}\right),\tR_d\right),\label{eq:relrel}\\ 
& \quad \text{ where }\; \mPhi_n^{(d,s)}\!=\!f^{(d)}(\mX_n^s; \tF) \;\text{ and }\; \mPhi^{*(d,s')}_q\!=\!f^{(d)}(\mX^{*s'}_q\!; \tF), \nonumber
\end{align}
with the following MSE loss:
\begin{align}
&\argmin\limits_{\tF, \tR_1,\cdots,\tR_D} L_{R}, \quad\text{ where }\quad L_R =\label{eq:sosn22}\\
&\expect\limits_{(\mathcal{W}_1,\cdots,\mathcal{W}_L,q)\sim\mathcal{E}}\sum\limits_{l\in\idx{L}}\sum\limits_{d\in\idx{D}} \sum\limits_{s\in S}\sum\limits_{s'\in S}\frac{1}{ss'}\left(\zeta^{(d,ss')}_{lq}\! - \delta(c_l\!-\!c^*_q)\right)^2\!.\nonumber
\end{align}

Such a basic formulation  outperforms our earlier formulations. However, the relation network in Eq. \eqref{eq:relrel} and the loss in Eq. \eqref{eq:sosn22} can be modified to form a better strategy utilizing the abstraction level and scale matching steps, as detailed next.
}

\vspace{0.1cm}
\noindent\textbf{Feature Matching (FM).} Although MlSo can extract feature vectors at various spatial scales and levels of visual abstraction, not all scales and visual abstraction levels have the same importance. Figure \ref{fig:cam} shows that the features across different levels may contain foreground, context or even background information. In order to control the respective contributions of each scale and level of feature abstraction, we introduce a so-called Feature Matching. The FM step acts as an adaptive switch or weight for every branch of the relation learner. 

Moreover, FM  can be applied at the intra- or inter-level. Figure \ref{fig:matching} (a) shows that   FM can be applied within the single-level scale-wise features. Such a strategy is called intra-level matching as exclusive base learners are trained to make predictions at each respective level.  Alternatively, one can choose to apply  FM over multi-level representations, as shown in Figure \ref{fig:matching} (b). Such a strategy employs a shared base learner to make predictions over all available pairs.

For intra- and inter-matching, we investigate four types of matching: Cosine Matching (CM), Gate Module (GM), Optimal Transport (OT) and GRaph matching (GR). Let us  take the intra-level matching scheme as example. 

Let $\vpsi_n^{(d,s)}\!\!\equiv\!(\mPsi_n^{(d,s)})_{(:)}$ and $\vpsi_q^{*(d,s')}\!\!\equiv\!(\mPsi_q^{*(d,s')})_{(:)}$ be pooled (as in Eq. \eqref{eq:concat_best} but without the concatenation step) support and query descriptors, vectorized by the operator $(:)$.  

\vspace{0.1cm}
\noindent
\textbf{Cosine Matching (CM)} simply reweights scale-wise feature pairs via the cosine similarity  $\alpha_{nq}^{(d,ss')}\!=\!\langle\vpsi_n^{(d,s)},\vpsi_q^{*(d,s')}\rangle$. 

\vspace{0.1cm}
\noindent
\textbf{Gate Module (GM)} in Fig. \ref{fig:blocks} learns  to assign an attention score for each pair based on a convolutional network $g(\cdot)$ so that  $\alpha_{nq}^{(d,ss')}\!\!=\!g(\vpsi_n^{(d,s)})\!\cdot\!g(\vpsi_q^{*(d,s')})$.
For CM and GM, the updated relation score is defined as follows:
\setcounter{equation}{12}
\begin{align}
    \zeta'^{(d)}_{nq} = \alpha_{nq}^{(d,ss')}r(\vpsi_n^{(d,s)}, \vpsi_q^{*(d,s')}; \tR_d).
\end{align}
Subsequently, we form the following MSE loss:
\begin{align}
&\argmin\limits_{\tF, \tR_1,\cdots,\tR_D} L_{R} =\!\!\!\!\!\!\\ &\text{ where }L_{R}\label{eq:sosn_new1} =\!\!\!\expect\limits_{(\mathcal{W}_1,\cdots,\mathcal{W}_L,\,q)\sim\mathcal{E}}\sum\limits_{l\in\idx{L}}\sum\limits_{n\in\mathcal{W}_l}\sum\limits_{d\in\idx{D}}\! \left(\zeta^{'(d)}_{nq}\!-\!\delta(c_l\!-\!c^*_q)\right)^2,
\nonumber
\end{align}
where the label-related symbols are defined as for Eq. \eqref{eq:sosn1}.

\vspace{0.1cm}
\noindent
\textbf{Optimal Transport (OT)}. DeepEMD \cite{deepemd}  transports the support location-wise representations to match the query location-wise representations (location-wise matching). In contrast, we propose a strategy complementary to design of DeepEMD. We solve a linear program to transport the support intra-level scale-wise representations into the query intra-level scale-wise representations. We obtain the dominant matching pattern of spatial scales at the given  abstraction levels by:
\begin{align}
\vspace{-0.5cm}
    &\zeta^{'(d)}_{nq}\!=\!1-\min\limits_{\boldsymbol{\omega}^{(d)}_{nq}\geq 0}\;\;\sum\limits_{s,s'}\omega_{nq}^{(d,ss')}\alpha_{nq}^{(d,ss')}\!\!,\\
   &\qquad\qquad\;\;{\textstyle\text{\footnotesize s.t. }\scriptstyle\sum\limits_{s'}\omega_{nq}^{(d,ss')}=\Omega_n^{(d,s)},\;\; \sum\limits_s \omega_{nq}^{(d,ss')}=\Omega_q^{*(d,s')}\!\!},\nonumber\\
    &\text{where }\alpha_{nq}^{(d,ss')}\!\!=\!\langle\vpsi_n^{(d,s)},\vpsi_q^{*(d,s')}\rangle,\nonumber
\end{align}

\noindent
and $\Omega_n^{(d,s)}\!\!=\!\max(0,\langle\vpsi_n^{(d,s)}\!,\frac{1}{N_s}\sum_{s'=1}^{S} \vpsi_q^{*(d,s')}\rangle)$ and $\Omega_q^{*(d,s')}\!\!=\!\max(0,\langle\vpsi_q^{*(d,s')}, \frac{1}{N_s}\sum_{s=1}^{S} \vpsi_n^{(d,s)})$ constraint weights
$\omega_{nq}^{(d,ss')}$. 
Subsequently, we apply the loss in Eq. \eqref{eq:sosn_new1}.


\vspace{0.1cm}
\noindent
\textbf{GRaph matching (GR)} strategy is based on a Graph Neural Network (GNN), which learns correlations  between support and query samples at different levels of abstraction and scales. To implement GR, we form an adjacency matrix $\boldsymbol{\alpha}^{(d)}_{nq}\!\in\!\mbr{2S\times 2S}$ representing a weighted undirected graph capturing scales of the support-query pair for the abstraction level $d$.  
Following the standard GNN notation, we have  $\mathbf{H}^{(d,l'+1)}_{nq}\!\!=\!\sigma(\boldsymbol{\alpha}^{(d)}_{nq}\mathbf{H}^{(d,l')}_{nq}\mathbf{W}^{(l')})$, where $\sigma(\cdot)$ is a non-linearity and $l'$ refers to the GNN layer index. The node information matrix is given by $\mathbf{H}^{(d,1)}_{nq}\!\equiv\!\left[
[\vpsi^{(d,s)}_{n}]^T_{s\!\in\idx{S}};_1
[\vpsi^{*(d,s')}_{q}]^T_{s'\!\in\idx{S}}
\right]$, and $\mathbf{W}^{(l')}$ are filters of GNN. The adjacency matrix  $\boldsymbol{\alpha}$  is defined as: 

\comment
{\small
\begin{equation}
    \!\!\!\!\alpha_{ij} = \begin{cases}
    %
    \langle\mPsi_n^{(d,i\text{mod}\,S)},\mPsi_q^{(d,j\text{mod}\,S)}\rangle & \text{if}\; \text{\fontsize{7}{8}\selectfont $(1\leq\!s\!\leq 3\wedge 4\leq\!s'\!\leq 6) \vee (4\leq\!s\!\leq 6\wedge 1\leq\!s'\!\leq 3)$},\\
    0 & \text{otherwise}
    \end{cases}\!\!\!\!
\end{equation}
}

{
\vspace{-0.3cm}
\fontsize{7}{8}\selectfont
\begin{equation}
\!\!\!\boldsymbol{\alpha}^{(d)}_{nq}\!=\!\!\left[
\begin{array}{c|c}
\!\!\!\!\Big[\!\langle\mPsi_n^{(d,s)}\!\!,\mPsi_n^{(d,s')}\rangle\!\cdot\!e^{\frac{-(s\!-\!s')^2}{2\sigma'^2}}\Big]_{\substack{s\in\idx{S}\\s'\in\idx{S}}}\!\!\!\! & 
\!\!\!\!\Big[\langle\mPsi_n^{(d,s)}\!\!,\mPsi_q^{*(d,s')}\rangle\!\cdot\!e^{\frac{-(s\!-\!s')^2}{2\sigma'^2}}\Big]_{\substack{s\in\idx{S}\\s'\in\idx{S}}}\!\!\!\! \\
\hline
\!\!\!\!\Big[\!\langle\mPsi_q^{*(d,s)}\!\!,\mPsi_n^{(d,s')}\rangle\!\cdot\!e^{\frac{-(s\!-\!s')^2}{2\sigma'^2}}\Big]_{\substack{s\in\idx{S}\\s'\in\idx{S}}}\!\!\!\! &
\!\!\!\!\Big[\!\langle\mPsi_q^{*(d,s)}\!\!,\mPsi_q^{*(d,s')}\rangle\!\cdot\!e^{\frac{-(s\!-\!s')^2}{2\sigma'^2}}\Big]_{\substack{s\in\idx{S}\\s'\in\idx{S}}}\!\!\!\!
\end{array}
\right],
\end{equation}
\vspace{-0.2cm}
}

\noindent
where  the dot product $\langle\cdot,\cdot\rangle$ captures the visual similarity between representations at scales $s,s'\!\in\!\idx{S}$, and $e^{\frac{-(s\!-\!s')^2}{2\sigma'^2}}$ is the Radial Basis Function (RBF) similarity prior \wrt scales $s$ and $s'$. Moreover, $\sigma'\!=\!\frac{1}{3}(S\!-\!1)$ ensures that for the maximum difference of scales equal $S\!-\!1$, the RBF kernel decays by 90\% of its maximum value of one.

Subsequently, we apply a fully-connected layer to $\mathbf{H}^{(d,L')}$, where the last layer $L'\!=\!3$:
{
\begin{align}
\zeta_{nq}^{'(d)}\!=\!\text{MLP}\left(\mathbf{H}^{(d,L')}_{nq}\right), 
\end{align}
\vspace{-0.4cm}
}

\noindent
and then $\zeta_{nq}^{'(d)}$ can be substituted into Eq. \eqref{eq:sosn_new1}.

\vspace{0.1cm}
\noindent\textbf{Visual Abstraction Level and Scale Discriminator.} 
Self-supervised learning has become a mainstream tool in devising robust representations. 
Below we propose to employ a self-supervised Visual Abstraction Level and Scale Discriminator, called VALSD, whose role is to learn to predict the level of abstraction and the spatial scale indexes by observing the autocorrelation matrices of a given branch. Such a design is consistent with the idea of employing a simple pretext task. Self-supervised learning encourages the network to preserve additional information about images, leading to highly discriminative representations.

The VALSD unit is indicated in Figure \ref{fig:md_sosn}. We proceed by simply taking second-order representations $\vpsi^{(d,s)}_n$ and $\vpsi^{*(d,s)}_q$ for each level of abstraction $d\in\idx{D}$ and a spatial scale $s\!\in\!\idx{S}$, where $(d,s)$ indexes serve as labels for the pretext task. We employ the softmax cross-entropy classifier on $\vpsi^{(d,s)}_n$ and $\vpsi^{*(d,s)}_q$ with the goal of predicting the joint abstraction-scale label $d\!\cdot\!S\!+\!s$, which ranges from 1 to 9 for $S\!=\!3$ and three spatial scales.


%

\subsection{Unsupervised Pipeline}
\label{sec:un}
For completeness, below we introduce an unsupervised FSL pipeline which does not rely on any class-wise training  annotations. Inspired by  self-supervised learning, 
%
we apply extensive augmentations to available training images (or videos) to form the so-called positive set, and we label these instances  as similar.  The above step promotes FEN to learn invariance to augmentations in the embedding space. 

Specifically, given two  image inputs $\mX$ and $\mY$, we first apply random augmentations on these images \eg, rotation, flip, resized crop and the color jitter via operator $\text{Aug}(\cdot)$ which samples these transformations according to a uniform distribution. We obtain a set of $M$ augmented images:
\begin{align}
    \widehat{\mX}_n \sim \text{Aug}(\mX),\;\widehat{\mY}_q \sim \text{Aug}(\mY),\;n,q\in\idx{M}.   
\end{align}
Subsequently, we pass the augmented images to the feature encoder $f$ and obtain their embeddings. Equations below  assume $D$ levels of feature abstraction but using a single level may be achieved by dropping the superscript $(d)$ in what follows. For the augmented samples of $\mX$, we form  relation descriptors which represent positive pairs (similar instances). For  the augmented samples of $\mY$, we repeat the above step to also form positive pairs. We then form exhaustively the relation descriptors between the augmented samples of $\mX$ and $\mY$ which represent negative pairs (dissimilar instances). Finally, we obtain relation predictions $\boldsymbol{\zeta}^{(d)},\boldsymbol{\zeta}^{*(d)}\!\in\mbr{M\!\times\!M}$ from the relation network $r(\cdot)$ for the augmented samples of $\mX$ and $\mY$, respectively. We also obtain the relation predictions $\boldsymbol{\zeta}'^{(d)}\!\in\mbr{M\!\times\!M}$ evaluated between the augmented samples of $\mX$ and $\mY$. The above steps are realized by:
\begin{align}
\label{eq:enc}
&\mPhi^{(d)}_n \!=\! f^{(d)}(\widehat{\mX}_n; \tF),\; \mPhi^{*(d)}_q \!=\! f^{(d)}(\widehat{\mY}_q; \tF),\; n,q,n'\!,q'\!\!\in\!\idx{M},\;d\!\in\!\idx{D},\nonumber \\
&\zeta^{(d)}_{nn'}\;=\!r\left(\vartheta\!\left(\mPhi^{(d)}_n,\mPhi^{(d)}_{n'}\right); \tR_d\right),\;\nonumber\\
&\, \zeta'^{(d)}_{nq}\!=\! r\left(\vartheta\!\left(\mPhi^{(d)}_{n},\mPhi^{*(d)}_{q}\right); \tR_d\right),\;\nonumber\\
&\zeta^{*(d)}_{qq'}\!=\! r\left(\vartheta\!\left(\mPhi^{*(d)}_q,\mPhi^{*(d)}_{q'}\right); \tR_d\right).
\end{align}
Finally, we minimize the contrastive loss $L_{uns}$ \wrt $\tF$ and $\tR$ in order to push closer the positive embedding pairs of the augmented samples generated from the same image ($\mX$ followed by $\mY$)  and push apart the negative pairs of embeddings  of the augmented samples generated from $\mX$ and $\mY$ (two different images):
\comment{
\begin{align}
     \argmin\limits_{\tF, \tR_1,\cdots,\tR_D}\;\;
     \expect\limits_{(\mX,\mY)\sim\mathcal{E}_{uns}}\;\;
     \expect\limits_{\left\{(\widehat{\mX}_n,\widehat{\mY}_q)\sim \text{Aug}(\mX)\times \text{Aug}(\mY)\right\}_{n,q\in\idx{M}} }
     \;\sum\limits_{d\in\idx{D}}\;\parallel\!\mS^{(d)}\!-\!1\!\parallel^2_F + \parallel\!\mS^{*(d)}\!-1\!\parallel^2_F
     +\parallel\!\mS'^{(d)}\!\parallel^2_F,\label{eq:urn}
\end{align}}
{
\begin{align}
\!\!\!\!\!\!&\argmin\limits_{\tF, \tR_1,\cdots,\tR_D}
     \expect\limits_{(\mX,\mY)}
     \expect\limits_{(\widehat{\mX}_n,\widehat{\mY}_q)_{n,q\in\idx{M}} }
     \sum\limits_{d\in\idx{D}}\!\!\parallel\!\!\boldsymbol{\zeta}^{(d)}\!\!-\!1\!\!\parallel^2_F
     \!+\! \parallel\!\!\boldsymbol{\zeta}^{*(d)}\!\!-\!1\!\!\parallel^2_F
     \!+\!\parallel\!\!\boldsymbol{\zeta}'^{(d)}\!\!\parallel^2_F,\nonumber\\
\!\!\!\!\!\!&\;\;\text{ where } (\mX,\mY)\!\sim\!\mathcal{E}_{uns} \text{ and }(\widehat{\mX}_n, \widehat{\mY}_q)\!\sim\! \text{Aug}(\mX)\!\times\! \text{Aug}(\mY).\label{eq:urn} 
\end{align}}
During the inference step, we simply apply Eq. \eqref{eq:sosn3}.



\section{Experiments}

\begin{table}[t]
\centering
\caption{Evaluations on the Omniglot dataset. 
}
\label{table2}
\makebox[\linewidth]{
\setlength{\tabcolsep}{0.5em}
\renewcommand{\arraystretch}{0.60}
\begin{tabular}{lcccccc}
\toprule
Model & & Fine & \multicolumn{2}{c}{5-way accuracy} & \multicolumn{2}{c}{20-way accuracy} \\ 
&& Tune & 1-shot & 5-shot & 1-shot & 5-shot  \\ \hline
\textit{Conv. Siamese Nets} &\cite{koch2015siamese} & N & $96.7$ & $98.4$ & $88.0$ & $96.5$ \\
\textit{Conv. Siamese Nets} &\cite{koch2015siamese} & Y & $97.3$ & $98.4$ & $88.1$ & $97.0$ \\
\textit{Matching Net} &\cite{vinyals2016matching} & N & $98.1$ & $98.9$ & $93.8$ & $98.5$ \\
\textit{Prototypical Net} &\cite{snell2017prototypical} & N & $99.8$ & $99.7$ & $96.0$ & $98.9$ \\ 
\textit{MAML} & \cite{finn2017model} & Y & $98.7$ & $99.9$ & $95.8$ & $98.9$ \\ 
\textit{Relation Net} & \cite{sung2017learning} & N & $99.6$ & $99.8$ & $97.6$ & $99.1$ \\ 
\cdashline{1-2}
\textit{SoSN+PN} & & N & ${99.8}$ & ${99.9}$ &${98.3}$ & ${99.4}$ \\ 
\textit{MlSo+PN} & & N & $\mathbf{99.9}$ & $\mathbf{99.9}$ & $\mathbf{98.7}$ & $\mathbf{99.7}$ \\ 
\bottomrule
\end{tabular}}
\end{table}

\label{sec:exp}
Below we demonstrate the usefulness of our approach by evaluating it on the Omniglot \cite{lake_oneshot}, \textit{mini}-ImageNet \cite{vinyals2016matching} and \textit{tiered}-ImageNet \cite{ren18fewshotssl}  datasets, a recently proposed Open MIC dataset \cite{me_museum}, fine-grained  Flower102 \cite{Nilsback08}, CUB-200  \cite{WahCUB_200_2011} and Food-101 \cite{bossard14} datasets, and action recognition datasets such as HMDB51 \cite{Kuehne11}, UCF101 \cite{soomro2012ucf101} and \textit{mini}-MIT \cite{soomro2012ucf101}. 
We evaluate the performance of SoSN and MlSo in both supervised and unsupervised settings to demonstrate their superior performance compared to first-order representations in few-shot learning. 
%
We train our network with the Adam solver. The layer configurations of our SoSN model are shown in Figure \ref{fig:blocks}.  
The results are compared against several state-of-the-art methods in one- and few-shot learning.

\begin{table*}[t]
\centering
\caption{Evaluations on the \textit{mini}-ImageNet dataset (5-way accuracy). {Asterisk `*' highlights that we converted the supervised Prototypical Net and the supervised Relation Net into the unsupervised contrastive pipelines, U-Prototypical Net and U-Relation Net, using the same mechanism (described in Section \ref{sec:un}) as the one applied for U-SoSN and U-MlSo.} }
\label{table3}
\makebox[\linewidth]{
\setlength{\tabcolsep}{2em}
\begin{tabular}{lcccc}
\toprule
\multicolumn{2}{c}{Model} & Backbone & 1-shot & 5-shot \\ \hline
& & & \multicolumn{2}{c}{Supervised Pipelines} \\
\midrule
\textit{Prototypical Net} & \cite{snell2017prototypical} & Conv-4 & $49.42 \pm 0.78 $ & $68.20 \pm 0.66 $ \\ 
\textit{MAML} & \cite{finn2017model} & Conv-4 & $48.70 \pm 1.84 $ & $63.11 \pm 0.92 $ \\ 
\textit{Relation Net} & \cite{sung2017learning} & Conv-4 & $50.44 \pm 0.82 $ & $65.32 \pm 0.70 $  \\ 
\textit{LwoF} & \cite{lowf} & WRN &  $56.20 \pm 0.86$ & $72.81 \pm 0.62$ \\
\textit{GNN} & \cite{gnn} & Conv-4 & $50.30 $ & $66.40 $  \\ 
\textit{Reptile} & \cite{Nichol2018Reptile} & Conv-4 &  $49.97 \pm 0.32 $ & $65.99 \pm 0.58 $  \\
\textit{Saliency Net} & \cite{zhang2019few} & Conv-4 & $57.45 \pm 0.86 $ & $72.01 \pm 0.75 $  \\
\textit{MAML++} & \cite{Antoniou2019Htmaml} & Conv-4 & $52.15 \pm 0.26 $ & $68.32 \pm 0.44 $  \\
\textit{TPN} & \cite{liu2018learning} & Conv-4 & 55.51 & 69.86 \\
\textit{Boosting} & \cite{gidaris2019boosting} & Conv-4 & $53.63 \pm 0.43$ & $71.70 \pm 0.36$ \\
\textit{TAML} & \cite{TAML} & Conv-4 & $51.77 \pm 1.86$ & $65.60 \pm 0.93$\\
\textit{KTN} & \cite{KTN} & Conv-4 & $54.61 \pm 0.80$ & $71.21 \pm 0.66$\\
\textit{ArL} & \cite{arl} & Conv-4 & $57.48 \pm 0.65$ & $72.64 \pm 0.45$ \\
\textit{TADAM} & \cite{tadam} & ResNet-12 & $58.50 \pm 0.30$ & $76.70 \pm 0.30$ \\
\textit{Variational FSL} & \cite{Zhang2019VariationalFL} & ResNet-12 & $61.23 \pm 0.26$ & $77.69 \pm 0.17$ \\
\textit{DeepEMD} & \cite{deepemd} & ResNet-12 & $65.91 \pm 0.82$ & $82.41 \pm 0.56$ \\
\textit{MetaOptNet} & \cite{metaopt}             & ResNet-18 & $62.64 \pm 0.61$ & $78.63 \pm 0.46$ \\
\textit{PN + SSL} & \cite{su2020does} & ResNet-18 & 58.40 & 76.60 \\
\textit{SimpleShot} & \cite{wang2019simpleshot} & ResNet-18 & $63.10 \pm 0.20$ & $79.92 \pm 0.14$  \\
\textit{FLAT} & \cite{FLAT} & WRN-28-10 & $59.88 \pm 0.83$ & $77.14 \pm 0.59$\\
\cdashline{1-2}
\textit{SoSN($\otimes$+F)+PN} & Eq. \eqref{eq:concat_f} & Conv-4 & $50.57 \pm 0.84 $ & $65.91 \pm 0.71 $  \\
\textit{SoSN($\otimes$) (no PN)} &  Eq. \eqref{eq:concat_best} & Conv-4 & $50.88 \pm 0.85 $ & $66.71 \pm 0.67 $  \\
\textit{SoSN($\otimes$+R)+PN} &  Eq. \eqref{eq:concat_r} & Conv-4 & $52.96\pm 0.83 $ & $68.58 \pm 0.70 $  \\
\textit{SoSN($\otimes$)+PN} & Eq. \eqref{eq:concat_best} & Conv-4 & ${52.96\pm 0.83 }$ & ${68.63 \pm 0.68}$  \\
\textit{MlSo($\otimes$)+PN} & Eq. \eqref{eq:concat_best} & Conv-4 & $\mathbf{58.03 \pm 0.80}$ & $\mathbf{73.06 \pm 0.72}$\\
\cdashline{1-2}
\textit{SoSN($\otimes$)+PN} & Eq. \eqref{eq:concat_best} & ResNet-12 & ${58.26\pm 0.87 }$ & ${73.20 \pm 0.68}$  \\
\textit{MlSo($\otimes$)+PN} & Eq. \eqref{eq:concat_best} & ResNet-12 & {$\mathbf{66.08\pm 0.85}$} & {$\mathbf{82.32 \pm 0.66}$}  \\
\textit{MlSo($\otimes$)+PN} & Eq. \eqref{eq:concat_best} & ResNet-18 & {$\mathbf{66.79 \pm 0.86}$} & {$\mathbf{83.41 \pm 0.67}$}  \\
\midrule
& & & \multicolumn{2}{c}{Unsupervised Pipelines} \\ \midrule
\multicolumn{2}{l}{\textit{Pixel (Cosine)}} & - & $23.00$ & $26.60$  \\
{\textit{BiGAN ($k_{nn}$)}} &  \cite{bigan} & - & $25.56$ & $31.10$  \\
\textit{BiGAN (cluster match)} & \cite{bigan} & - & $24.63$ & $29.49$  \\
\textit{DeepCluster ($k_{nn}$)}& \cite{deepcluster} & - & $28.90$ & $42.25$  \\
{\textit{DeepCluster (cluster match)}} & \cite{deepcluster} & - & $22.20$ & $23.50$  \\
{\textit{U-Prototypical Net}}\textsuperscript{*} & \cite{snell2017prototypical} & Conv-4 &  $35.85$ & $48.01$\\
{\textit{U-Relation Net}}\textsuperscript{*} & \cite{sung2017learning} & Conv-4 &  $35.14$ & $44.10$\\
\textit{UMTRA}& \cite{umtra} & Conv-4 & $39.91$ & $ 50.70$  \\
\textit{CACTUs} &\cite{cactu} & Conv-4 & $39.94$ & $ 54.01$  \\
\arrayrulecolor{black}
\cdashline{1-2}
\multicolumn{2}{l}{\textit{U-SoSN($\otimes$)+PN}} & Conv-4 & $37.94$ & $50.95$ \\
\multicolumn{2}{l}{\textit{U-MlSo+PN}} & Conv-4 & {$\mathbf{41.09}$} & {$\mathbf{55.38}$} \\
\bottomrule
\end{tabular}}
\end{table*}

\begin{table}[t]
\centering
\caption{Evaluations on the \textit{tiered}-ImageNet dataset (5-way accuracy) with the Conv-4 backbone.}
\label{table_tiered}
\makebox[\linewidth]{
\fontsize{8.5}{9}\selectfont
\begin{tabular}{lccc}
\toprule
Model && 1-shot & 5-shot \\ \hline
\textit{Incremental}&\cite{incremental}& $51.12 \pm 0.45$ & $66.40 \pm 0.36 $ \\
\textit{Soft k-means}&\cite{ren18fewshotssl}& $52.39 \pm 0.44 $ & $69.88 \pm 0.20 $ \\
\textit{MAML}&\cite{finn2017model}& $51.67 \pm 1.81 $ & $70.30 \pm 0.08 $ \\
\textit{Reptile}&\cite{Nichol2018Reptile}& $48.97 \pm 0.21 $ & $66.47 \pm 0.21 $ \\
\textit{Prototypical Net} & \cite{snell2017prototypical} & $53.31 \pm 0.89 $ & $72.69 \pm 0.74 $ \\ 
\textit{Relation Net} & \cite{sung2017learning} & $54.48 \pm 0.93$ & $71.32 \pm 0.78$ \\
\textit{TPN} &\cite{liu2018learning} & $57.41 \pm 0.94$ & $71.55 \pm 0.74$ \\
\cdashline{1-2}
\textit{SoSN} & & $58.62 \pm 0.92$ & $75.19 \pm 0.79$ \\
\textit{MlSo} && $\mathbf{61.97 \pm 0.91}$ & $\mathbf{78.83 \pm 0.77}$ \\
\midrule
& \multicolumn{2}{c}{Unsupervised Pipelines} \\ \midrule
\textit{U-Prototypical Net} & \cite{snell2017prototypical} & $37.52 \pm 0.93$ & $51.03 \pm 0.84$ \\
\textit{U-Relation Net} & \cite{sung2017learning} & $37.23 \pm 0.94$ & $49.54 \pm 0.83$ \\

\cdashline{1-2}
\textit{U-SoSN} & & $41.59 \pm 0.92$ & $55.81 \pm 0.76$ \\
\textit{U-MlSo} & & $\mathbf{43.01 \pm 0.91}$ & $\mathbf{57.53 \pm 0.74}$ \\ 
\bottomrule
\end{tabular}}
\end{table}

\comment{
\begin{table*}[t]
\centering
\caption{Ablations \wrt the number of abstraction levels $L$ on the \textit{mini}-ImageNet dataset (5-way accuracy) with the Conv-4 backbone. 
}
\label{table_depth}
\makebox[\textwidth]{
\fontsize{8.5}{9}\selectfont
\begin{tabular}{l|ccc|ccc}
\toprule
Model & \multicolumn{3}{c|}{1-shot} & \multicolumn{3}{c}{5-shot} \\ \hline
\# of levels $L$ & 2 & 3 & 4 & 2 & 3 & 4 \\ \hline
\textit{MlSo} & $55.24 \pm 0.81$ & $55.66\pm 0.83$ & $\mathbf{55.93 \pm 0.85}$ & $70.85 \pm 0.72$ & $70.88 \pm 0.70$ & $\mathbf{71.05 \pm 0.71}$  \\
\textit{MlSo+VALD} & $56.07\pm 0.82$ & $56.10 \pm 0.81$ & $\mathbf{56.31 \pm 0.83}$ & $\mathbf{72.07 \pm 0.69}$ & $71.65 \pm 0.70$ & $71.52 \pm 0.71$ \\
\textit{MlSo+GM} & $\mathbf{56.48\pm 0.83}$ & $ 56.31\pm 0.81 $ & $56.11 \pm 0.84$ & $\mathbf{71.67 \pm 0.70}$ & $71.31 \pm 0.71$ & $71.23 \pm 0.73$ \\
\textit{MlSo+VALD+GM} & $56.72\pm 0.82$ & $\mathbf{56.89 \pm 0.81}$ & $56.83 \pm 0.82$ & $72.31 \pm 0.72$ & $\mathbf{72.56 \pm 0.71}$ & $72.31 \pm 0.70$\\
\textit{MlSo+VALD+GM} & $56.72\pm 0.82$ & $\mathbf{56.89 \pm 0.81}$ & $56.83 \pm 0.82$ & $72.31 \pm 0.72$ & $\mathbf{72.56 \pm 0.71}$ & $72.31 \pm 0.70$\\
\bottomrule
\end{tabular}}
\end{table*}}

\begin{table*}[t]
\centering
\caption{\hg Evaluations on the fine-grained classification datasets (5-way accuracy) with the Conv-4 backbone.
}
\label{table5}
\makebox[\textwidth]{
\begin{tabular}{lccccccc}
\toprule
&& \multicolumn{2}{c}{CUB Birds} & \multicolumn{2}{c}{Stanford Dogs} & \multicolumn{2}{c}{Stanford Cars} \\
Model && 1-shot & 5-shot & 1-shot & 5-shot & 1-shot & 5-shot \\ \hline
\textit{\hg Matching Net} & \cite{vinyals2016matching} & \hg $45.30 \pm 1.03$ & \hg $59.50 \pm 1.01$ & \hg $35.80 \pm 0.99$ & \hg $47.50 \pm 1.03$ & \hg $34.80 \pm 0.98$ & \hg $44.70 \pm 1.03$ \\ 
\textit{\hg MAML} & \cite{finn2017model} & \hg $58.13 \pm 0.36$ & \hg $71.51 \pm 0.30$ & \hg $44.84 \pm 0.31$ & \hg $58.61 \pm 0.30$ & \hg $47.25 \pm 0.30$ & \hg $61.11 \pm 0.29$ \\ 
\textit{\hg Proto. Net} & \cite{snell2017prototypical} & \hg $37.36 \pm 1.00$ & \hg $45.28 \pm 1.03$ & \hg $37.59 \pm 1.00$ & \hg $48.19 \pm 1.03$ & \hg $40.90 \pm 1.01$ & \hg $52.93 \pm 1.03$  \\ 
\textit{\hg Relation Net} & \cite{sung2017learning} & \hg $58.99 \pm 0.52$ & \hg $71.20 \pm 0.40$ & \hg $43.29 \pm 0.46$ & \hg $55.15 \pm 0.39$ & \hg $47.79 \pm 0.49$ & \hg $60.60 \pm 0.41$  \\ 
\textit{\hg LRPABN} & \cite{huang2020low} & \hg $63.63 \pm 0.77$ & \hg $76.06 \pm 0.58$ & \hg $45.72 \pm 0.75$ & \hg $60.94 \pm 0.66$ & \hg $60.28 \pm 0.76$ & \hg $73.29 \pm 0.63$ \\
\textit{\hg MattML} & \cite{zhu2020multi} & \hg $66.29 \pm 0.56$ & \hg $80.34 \pm 0.30$ & \hg $54.84 \pm 0.53$ & \hg $71.34 \pm 0.38$ & \hg $66.11 \pm 0.54$ & \hg $82.80 \pm 0.28$ \\
{\textit{\hg SoSN}} & \cite{sosn} & \hg ${64.56 \pm 0.91}$ & \hg ${77.82 \pm 0.57}$ & \hg ${48.21 \pm 0.72}$ & \hg ${63.15 \pm 0.67}$ & \hg $62.88 \pm 0.72$ & \hg ${76.10 \pm 0.58}$ \\ 
\cdashline{1-8}
\multicolumn{2}{l}{\textit{\hg MlSo}} & \hg {$\mathbf{68.21 \pm 0.78}$} & \hg {$\mathbf{82.18 \pm 0.47}$} & \hg {$\mathbf{55.62 \pm 0.58}$} & \hg {$\mathbf{71.98 \pm 0.71}$} & \hg {$\mathbf{67.83 \pm 0.63}$} & \hg {$\mathbf{84.98 \pm 0.48}$}  \\ 
\midrule
\textit{\hg U-Proto. Net} & \cite{sung2017learning} & \hg $34.51 \pm 0.53$ & \hg $56.32 \pm 0.41$ & \hg $32.05 \pm 0.49$ & \hg $43.96 \pm 0.48$ & \hg $33.87 \pm 0.57$ & \hg $48.20 \pm 0.46$  \\ 
\textit{\hg U-Relation Net} & \cite{sung2017learning} &  \hg $35.42 \pm 0.55$ & \hg $57.96 \pm 0.43$ & \hg $32.75 \pm 0.49$ & \hg $44.37 \pm 0.46$ & \hg $34.43 \pm 0.54$ & \hg $48.71 \pm 0.45$  \\ 
{\textit{\hg U-SoSN}} & \cite{sosn} & \hg ${43.14 \pm 0.51}$ & \hg ${65.02 \pm 0.43}$ & \hg ${41.56 \pm 0.49}$ & \hg ${53.62 \pm 0.47}$ & \hg ${40.31 \pm 0.55}$ & \hg ${57.98 \pm 0.43}$ \\ 
\cdashline{1-8}
\multicolumn{2}{l}{\textit{\hg U-MlSo}} & \hg {$\mathbf{46.31 \pm 0.53}$} & \hg {$\mathbf{68.67 \pm 0.46}$} & \hg {$\mathbf{45.02 \pm 0.48}$} & \hg {$\mathbf{56.89 \pm 0.46}$} & \hg {$\mathbf{43.81 \pm 0.56}$} & \hg {$\mathbf{61.13 \pm 0.41}$} \\
\bottomrule
\end{tabular}}
\end{table*}

\comment{
\begin{table*}[t]
\centering
\caption{Evaluations on fine-grained classification datasets  (5-way accuracy) with the Conv-4 backbone.
}
\label{table5}
\makebox[\textwidth]{
\begin{tabular}{lccccccc}
\toprule
&& \multicolumn{2}{c}{Flower102} & \multicolumn{2}{c}{CUB-200-2011} & \multicolumn{2}{c}{Food-101} \\
Model && 1-shot & 5-shot & 1-shot & 5-shot & 1-shot & 5-shot \\ \hline
\textit{Matching Net} & \cite{vinyals2016matching} & $61.21 \pm 0.91$ & $79.24 \pm 0.66$ & $36.79 \pm 0.85$ & $50.87 \pm 0.71$ & $33.85 \pm 0.73$ & $48.21 \pm 0.67$ \\ 
\textit{MAML} & \cite{finn2017model} & $61.98 \pm 0.90$ & $80.34 \pm 0.65$ & $37.83 \pm 0.84$ & $51.16 \pm 0.73$ & $34.26 \pm 0.71$ & $48.78 \pm 0.65$ \\ 
\textit{Prototypical Net} & \cite{snell2017prototypical} & $62.81 \pm 0.93$ & $82.11 \pm 0.65$ & $37.42 \pm 0.86$ & $51.57 \pm 0.73$ & $34.97 \pm 0.71$ & $49.13 \pm 0.66$  \\ 
\textit{Relation Net} &\cite{sung2017learning} & $68.26 \pm 0.94$ & $80.94 \pm 0.66$ & $40.62 \pm 0.84$ & $53.91 \pm 0.74$ & $36.89 \pm 0.72$ & $49.07 \pm 0.65$  \\ 
\textit{LRPABN} & \cite{huang2020low} & $72.15 \pm 0.92$ & $85.08 \pm 0.58$ & $44.15 \pm 0.85$ & $57.02 \pm 0.69$ & $38.91 \pm 0.70$ & $52.06 \pm 0.63$ \\
MattML & \cite{zhu2020multi} & $74.32 \pm 0.80$ & $89.31 \pm 0.53$ & $47.33 \pm 0.82$ & $60.88 \pm 0.67$ & $43.31 \pm 0.68$ & $56.98 \pm 0.62$ \\
{\textit{SoSN}} & \cite{sosn} & ${ 73.07 \pm 0.94}$ & ${ 87.68 \pm 0.55}$ & ${ 46.72 \pm 0.89}$ & ${ 60.34 \pm 0.73}$ & ${ 41.15 \pm 0.75}$ & ${ 54.92 \pm 0.65}$ \\ 
\cdashline{1-8}
\multicolumn{2}{l}{\textit{MlSo}} & {$\mathbf{80.57 \pm 1.00}$} & {$\mathbf{92.01 \pm 0.68}$} & {$\mathbf{53.44 \pm 1.02}$} & {$\mathbf{67.96 \pm 0.95}$} & {$\mathbf{48.01 \pm 0.73}$} & {$\mathbf{62.14 \pm 0.92}$}\\ 
\midrule
&& \multicolumn{6}{c}{Unsupervised Pipelines} \\ \midrule
\textit{U-RN}&\cite{sung2017learning} & ${55.54 \pm 0.95}$ & ${68.86 \pm 0.57}$ & ${29.36 \pm 0.88}$ & ${36.36 \pm 0.71}$ & ${27.70 \pm 0.73}$ & ${32.19 \pm 0.66}$ \\
{\textit{U-SoSN}} & \cite{sosn} & ${69.14 \pm 0.93}$ & ${84.10 \pm 0.54}$ & ${37.93 \pm 0.86}$ & ${51.55 \pm 0.71}$ & ${37.32 \pm 0.73}$ & ${54.65 \pm 0.63}$ \\ 
\cdashline{1-8}
\multicolumn{2}{l}{\textit{U-MlSo}} & {$\mathbf{72.31 \pm 0.93}$} & {$\mathbf{88.57 \pm 0.54}$} & {$\mathbf{41.37 \pm 0.85}$} & {$\mathbf{55.02 \pm 0.71}$} & {$\mathbf{40.83 \pm 0.73}$} & {$\mathbf{59.69 \pm 0.62}$} \\
\bottomrule
\end{tabular}}
\end{table*}
}

\begin{table*}[t]
\centering
\caption{\hg Evaluations on the action recognition benchmarks (5-way accuracy). To fairly compare our method with the prior works, we follow the same evaluation splits with \cite{arn}. For our baselines, we re-implement the Prototypical Net and the Relation Net both with the use of 3D convolutions.}
\label{table6}
\makebox[\textwidth]{
\begin{tabular}{lccccccccc}
\toprule
&& \multicolumn{2}{c}{HMDB51} & \multicolumn{2}{c}{UCF101} & \multicolumn{2}{c}{{\em mini}-MIT} & \multicolumn{2}{c}{Kinetics} \\
Model && 1-shot & 5-shot & 1-shot & 5-shot & 1-shot & 5-shot & 1-shot & 5-shot \\ \hline
\textit{\hg C3D PN} & \cite{snell2017prototypical} & \hg ${38.05 \pm 0.97}$ & ${53.15 \pm 0.90}$ & \hg ${57.05 \pm 1.02}$ & \hg ${78.25 \pm 0.73}$ & \hg ${33.65 \pm 1.01}$ & \hg ${45.1 \pm 0.90}$ & \hg 57.11 & \hg 77.92 \\ 
\textit{\hg C3D RN} & \cite{sung2017learning} & \hg ${38.23 \pm 0.97}$ & \hg ${53.17 \pm 0.86}$ & \hg ${58.21 \pm 1.02}$ & \hg ${78.35 \pm 0.72}$ & \hg ${35.71 \pm 1.02}$ & \hg ${47.32 \pm 0.91}$ & \hg 56.98 & \hg 77.83\\
{\textit{\hg C3D SoSN}} & \cite{sosn} & \hg ${40.83 \pm 0.96}$ & \hg ${55.18 \pm 0.86}$ & \hg ${62.57 \pm 1.05}$ & \hg ${81.51 \pm 0.75}$ & \hg ${40.83 \pm 0.99}$ & \hg ${52.16 \pm 0.95}$ & \hg 58.77 & \hg 79.02 \\
{\textit{\hg CMN}} & \cite{cmn} & \hg -& \hg -& \hg -& \hg -& \hg -& \hg -& \hg 60.50 & \hg 78.90 \\
{\textit{\hg ARN}} & \cite{arn} & \hg ${45.52 \pm 0.96}$ & \hg ${58.96 \pm 0.87}$ & \hg ${66.32 \pm 0.99}$ & \hg ${83.12 \pm 0.70}$ & \hg ${43.05 \pm 0.97}$ & \hg ${56.71 \pm 0.87}$ & \hg 63.70 & \hg 82.40 \\ 
\cdashline{1-10}
\multicolumn{2}{l}{\textit{\hg C3D MlSo}} & \hg $\mathbf{46.69 \pm 0.93}$ & \hg $\mathbf{60.31 \pm 0.83}$ & \hg $\mathbf{68.19 \pm 0.95}$ & \hg $\mathbf{87.11 \pm 0.71}$ & \hg $\mathbf{44.67 \pm 0.95}$ & \hg $\mathbf{58.68 \pm 0.86}$ & \hg $\mathbf{66.32}$ & \hg $\mathbf{85.21}$  \\
\bottomrule
\end{tabular}}
\end{table*}

{
\begin{table*}[t]
\centering
\caption{Evaluations on the Open MIC dataset (Protocol I) (5-way 1-shot accuracy). 
}
\label{table_o1}
\makebox[\textwidth]{
\setlength{\tabcolsep}{0.4em}
\renewcommand{\arraystretch}{0.90}
\begin{tabular}{lccccccccccccc}
\toprule
Model & & $p1\!\!\rightarrow\!\!p2$ & $p1\!\!\rightarrow\!\!p3$& $p1\!\!\rightarrow\!\!p4$& $p2\!\!\rightarrow\!\!p1$& $p2\!\!\rightarrow\!\!p3$ &$p2\!\!\rightarrow\!\!p4$& $p3\!\!\rightarrow\!\!p1$& $p3\!\!\rightarrow\!\!p2$& $p3\!\!\rightarrow\!\!p4$& $p4\!\!\rightarrow\!\!p1$& $p4\!\!\rightarrow\!\!p2$& $p4\!\!\rightarrow\!\!p3$\\
\hline
\it Matching Net& \cite{vinyals2016matching} & $68.7$ & $52.3$ & $60.9  $ & $45.6  $ & $48.1  $ & $69.1 $ & $48.0 $ & $46.9 $ & $65.7  $ & $42.1  $ & $68.9  $ & $50.5  $\\
\it MAML & \cite{finn2017model} & $69.1  $ & $52.8  $ & $61.8  $ & $46.4  $ & $48.2  $ & $69.1 $ & $48.4 $ & $48.3 $ & $65.8 $ & $43.2  $ & $70.2  $ & $50.2  $\\
\it Reptile & \cite{Nichol2018Reptile} & $69.2  $ & $52.5  $ & $62.2  $ & $47.1  $ & $48.8  $ & $69.3 $ & $48.5 $ & $48.7 $ & $66.1 $ & $43.6  $ & $70.1  $ & $50.3  $\\
\it Proto. Net& \cite{snell2017prototypical} & $70.0  $ & $53.9  $ & $62.1  $ & $46.5  $ & $49.7  $ & $69.9 $ & $49.1 $ & $48.2 $ & $67.1  $ & $43.9  $ & $70.5  $ & $51.1  $\\
\it Relation Net& \cite{sung2017learning} & $71.1  $ & $53.6  $ & $63.5  $ & $47.2  $ & $50.6  $ & $68.5  $ & $48.5  $ & $49.7  $ & $68.4  $ & $45.5  $ & $70.3  $ & $50.8  $\\
\it SoSN & \cite{sosn} & ${81.4}   $ & ${65.2}   $ & ${75.1}   $ & ${60.3}   $ & ${62.1}   $ & ${77.7}   $ & ${61.5}   $ & ${82.0}   $ & ${78.0}   $ & ${59.0}   $ & ${80.8}   $ & ${62.5} $\\
\multicolumn{2}{l}{\it MlSo} & $\mathbf{81.6}   $ & $\mathbf{66.3} $ & $\mathbf{77.2} $ & $\mathbf{62.4} $ & $\mathbf{63.1} $ & $\mathbf{78.2} $ & $\mathbf{64.2} $ & $\mathbf{81.4} $ & $\mathbf{78.0} $ & $\mathbf{60.1} $ & $\mathbf{81.6} $ & $\mathbf{63.6} $\\ 
\midrule
\it U-Relation Net & \cite{sung2017learning} & ${67.1}   $ & ${48.1}   $ & ${62.5}   $ & ${41.2}   $ & ${45.3}   $ & ${58.0}   $ & ${50.1}   $ & ${57.5}   $ & ${53.8}   $ & ${46.9}   $ & ${66.1}   $ & ${43.3}   $\\
\it U-SoSN & \cite{sosn} & ${78.6}   $ & ${58.8}   $ & ${74.3}   $ & ${61.1}   $ & ${57.9}   $ & ${72.4}   $ & ${62.3}   $ & ${75.6}   $ & ${73.7}   $ & ${58.5}   $ & ${76.5}   $ & ${54.6}   $\\
\multicolumn{2}{l}{\it U-MlSo} & $\mathbf{80.9}   $ & $\mathbf{61.5}   $ & $\mathbf{76.3}$ & $\mathbf{62.0}$ & $\mathbf{60.3}$ & $\mathbf{75.1}$ & $\mathbf{64.1}$ & $\mathbf{77.9}$ & $\mathbf{76.2}   $ & $\mathbf{59.9}$ & ${79.1}$ & $\mathbf{58.2}$\\
\bottomrule
\end{tabular}}
p1: shn+hon+clv, p2: clk+gls+scl, p3: sci+nat, p4: shx+rlc. Notation {\em x$\!\rightarrow$y} means training on exhibition {\em x} and testing on {\em y}.
\end{table*}
}

\subsection{Datasets}
Below we describe our setup, as well as the category recognition, fine-grained and action recognition datasets.

\vspace{0.05cm}
\noindent\textbf{Omniglot } \cite{lake_oneshot} consists of 1623 characters  from 50 alphabets. Samples in each class are drawn by 20 different people. The dataset is split into 1200 classes for training and 423 classes for testing. All images are resized to $28\!\times\!28$ pixels.

\vspace{0.05cm}
\noindent\textbf{\textit{mini}-ImageNet} \cite{vinyals2016matching} consists of 60000 RGB images from 100 classes, each class containing 600 samples. 
We follow the standard protocol \cite{vinyals2016matching} and use 80 classes for training (16 classes selected for validation) and remaining 20 classes for testing. We  use images of $84\!\times\!84$ pixels.


\vspace{0.05cm}
\noindent\textbf{\textit{tiered}-ImageNet} \cite{ren18fewshotssl} consists of 608 classes from ImageNet. We follow the protocol with 351 base classes, 96 validation classes and 160 novel test classes.

\vspace{0.05cm}
\noindent\textbf{Open MIC} stands for the Open Museum Identification Challenge (Open MIC) \cite{me_museum}, a recent dataset with photos of various exhibits \eg, paintings, timepieces, sculptures, glassware, relics, science exhibits, natural history pieces, ceramics, pottery, tools and indigenous crafts, captured within 10 museum exhibition spaces according to which this dataset is divided into 10 sub-problems. In total, Open MIC has 866 diverse classes and 1--20 images per class. The within-class images undergo various geometric and photometric distortions as the data was captured with wearable cameras. This makes Open MIC a perfect candidate for testing one-shot learning algorithms. We combine ({\em shn+hon+clv}), ({\em clk+gls+scl}), ({\em sci+nat}) and ({\em shx+rlc}) into sub-problems {\em p1}, $\!\cdots$, {\em p4}. We form 12 possible pairs in which sub-problem $x$ is used for training and sub-problem $y$ is used for testing (x$\rightarrow$y).

\vspace{0.05cm}
\comment{
\noindent\textbf{Flower102} \cite{Nilsback08} is a fine-grained category recognition dataset that contains 102 classes of various flowers. Each class consists of 40-258 images. We randomly select 80 classes for training and 22 classes for testing.

\vspace{0.05cm}
\noindent\textbf{Food-101} \cite{bossard14} consists of 101000 images in total and 1000 images per category. We choose 80 classes for training and 21 classes for testing.

}
{\hg
\vspace{0.05cm}
\noindent\textbf{Caltech-UCSD-Birds 200-2011 (CUB Birds)} \cite{WahCUB_200_2011} has 11788 images of 200 bird species. We follow the splits from  \cite{huang2020low}, that is, 130 classes are selected for training, 20 classes for validation and the remaining 50 categories for testing.

\vspace{0.05cm}
\noindent\textbf{Stanford Dogs} \cite{cars} has 17150 instances of 120 dogs classes where 70 classes are used for training, 20 classes for validation and the remaining 30 classes for testing, which is consistent with the protocol in \cite{huang2020low}.

\vspace{0.05cm}
\noindent\textbf{Stanford Cars} \cite{dogs} dataset has 16190 samples of 196 car categories. Following the protocol in \cite{huang2020low}, we use the 130/17/49 class splits for training, validation and testing, respectively.

\vspace{0.05cm}
\noindent\textbf{HMDB51} \cite{Kuehne11}, an action recognition dataset, contains 6849 clips divided into 51 action categories, each with a minimum of 101 clips. Following the protocol in \cite{arn}, 31  classes are selected for training, 10  classes are used for validating and the remaining 10  classes are used for testing. 

\vspace{0.05cm}
\noindent\textbf{UCF101} \cite{soomro2012ucf101} contains  action videos collected from YouTube. It has 13320 video clips and 101 action classes. Following the protocol in \cite{arn}, we pick 70 classes for training, 10 classes for validation and the remaining 21 classes for testing.

\vspace{0.05cm}
\noindent\textbf{\textit{mini}-MIT} \cite{monfortmoments} is a subset of the newly proposed large-scale Moments in Time dataset. This mini version contains 200 classes with 550 videos per class. Following the protocol in \cite{arn}, we select 120 classes for training, 40 classes for validation and the rest 40 classes for testing.
}

\subsection{Experimental Setup}
For the Omniglot dataset, we follow the setup in \cite{sung2017learning}. 
For {\em mini}-ImageNet, we use the  5-way 1-shot and  5-way 5-shot protocols. For every training and testing episodes, we randomly select 5 and 3 query samples per class, respectively. We compute an average over 600 episodes to obtain results. 
We use the initial learning rate of $1e\!-\!3$ and train the model with $200000$ episodes. For \textit{tiered}-ImageNet, we follow the  settings used for the {\em mini}-ImageNet dataset. 
For Open MIC, we mean-center images per sub-problem. We use the initial learning rate of $1e\!-\!4$ and  train the network with $15000$ episodes. 
For the fine-grained classification datasets, we evaluate our models on the 5-way 1-shot and  5-way 5-shot protocols. The numbers of support and query samples in each episode are the same as in the {\em mini}-ImageNet setting. We use the initial learning rate of $1e\!-\!3$ and train the model with $200000$ episodes. 
For the action recognition datasets, we randomly sample 50 frames per video along the temporal mode. We resize video frames to $84\!\times\!84$ pixels, which results in a lightweight model. We evaluate our algorithm on the 5-way 1-shot and  5-way 5-shot protocols on the three datasets detailed earlier. We adopt the hyper-parameter configuration used on the {\em mini}-ImageNet dataset.

\subsection{Evaluation Results}
Below we evaluate SoSN and its multi-level extension MlSo, and we compare them with state-of-the-art methods on datasets introduced above. 

\vspace{0.05cm}
\noindent\textbf{Omniglot.} Table \ref{table2} shows rather saturated results. We consider experiments on the Omniglot dataset as a sanity check to validate the performance of SoSN in the best default setting, that is, using the relationship descriptor from Eq. \eqref{eq:concat_best} and Power Normalization from Proposition \ref{pr:axmin}. {We note that the basic multi-level
 multi-scale variant of MlSo based on Eq. \eqref{eq:sosn22} outperforms SoSN.}

\vspace{0.05cm}
\noindent\textbf{{\em mini}-ImageNet.} Table \ref{table3} demonstrates that our method outperforms other approaches on both 1- and 5-shot evaluation protocols.
Firstly, we note that comparisons between various relation descriptors with/without PN are included and discussed as ablation studies in Section \ref{sec:abl}. 

For the image size of $84\!\times\!84$ and the 5-way 1-shot experiment, our best singe-level SoSN model achieves $\sim\!2.5\%$ higher accuracy than Relation Net \cite{sung2017learning}. Our best singe-level SoSN also outperforms Prototypical Net by $\sim\!3.5\%$ accuracy on the 5-way 1-shot protocol. Not shown in the table are results for SoSN trained with images of $224\!\times\!224$ pixels, in which case the accuracy scores on both protocols increase by $5.45\%$ and $4.33\%$, respectively. Such an accuracy gain demonstrates that SoSN benefits from large image sizes as second-order matrices are of higher rank for higher image resolutions due to the higher spatial resolution of feature maps compared with the low-resolution counterparts. Our similarity learning network works with variable resolutions of input images because SoSN operates on matrices whose size depends only on the number of output channels of encoding network.  

{Furthermore, our variant of MlSo with the ResNet-12 backbone, based on the inter-level matching strategy of the multi-level multi-scale feature representations by the Gate Module, achieved a significant gain between 6\% and 9\% accuracy over SoSN.} 
For unsupervised FSL, Table \ref{table3} shows that our U-SoSN and U-MlSo significantly outperform other unsupervised baselines (`{\em U-}' stands for the unsupervised FSL setting) \eg, U-Relation Net and U-Prototypical Net. { Compared to the U-SoSN model, the U-MlSo model  improves the top-1 accuracy by another 2\% and 3\% on the 1- and 5-shot protocols, respectively. }

{We  discuss the ablations on MlSo in Section \ref{sec:abl}. Firstly, we  discuss the results of the single-level SoSN model on more datasets. In the following tables, we drop `{\em ($\otimes$)+PN}' from our notations but this particular relation descriptor with PN is the best performing variant, thus it is used across the remaining experiments.}

\vspace{0.05cm}
\noindent\textbf{{\em tiered-}ImageNet.} Table \ref{table_tiered} shows the performance of our proposed methods on {\em tiered-}ImageNet. Our SoSN achieves $\sim\!4.1\%$ and $3.9\%$ improvement in accuracy, compared with Relation Net for 1- and 5-shot protocols, respectively. { Moreover, MlSo with the Gate Module achieves $\sim\!7.0\%$ and $3.9\%$ improvement in accuracy, compared with Relation Net for 1- and 5-shot protocols, respectively. 
The unsupervised variant, U-MlSo, yields $\sim\!6\%$ and $8\%$ gain in accuracy over U-Relation Net for  1- and 5-shot protocols, respectively.
Our supervised and unsupervised FSL models outperform all other  FSL approaches based on the Conv-4 backbone.}

\vspace{0.05cm}
\noindent\textbf{Open MIC.} Table \ref{table_o1} demonstrates that our single-level SoSN model outperforms the Relation Net \cite{sung2017learning} for all train/test sub-problems of Protocol I. For the 5- and 20-way protocols, Relation Net scores $55.45\%$ and $31.58\%$ accuracy, respectively. In contrast, our single-level SoSN scores $70.46\%$ and $49.05\%$ accuracy, respectively. 

\begin{table}[t]
\centering
\caption{\small Ablation study on {\em mini}-ImageNet \wrt different modules of our pipeline  (5-way accuracy, the `Conv-4-64' backbone, 4 stages and 3 scales, and OT matching were used.)}
\label{table_ablation_3}
\makebox[\linewidth]{
\begin{tabular}{cccc|cc}
\toprule
 Baseline &  SB &  FM &  VALSD &  1-shot &  5-shot \\  \hline
 \checkmark & & & &  55.93 &  71.05 \\
 \checkmark &  \checkmark & & &  57.28 &  72.01 \\
 \checkmark &  \checkmark &  \checkmark & &  57.79 &  72.65 \\
 \checkmark &  \checkmark &  \checkmark &  \checkmark &  \bf 58.03 &  \bf 73.06 \\
\bottomrule
\end{tabular}}
\end{table}

\begin{table}[t]
\centering
\caption{\small  Ablation study on {\em mini}-ImageNet \wrt the number of abstraction levels and spatial scales (1-shot/5-shot accuracy, FM and VALSD were not used). }
\label{table_ablation_2}
\makebox[\linewidth]{
\renewcommand{\arraystretch}{1.2}
\begin{tabular}{cc|ccc}
& & \multicolumn{3}{c}{ Levels} \\
& & 2 & 3 & 4 \\ \hline
\multirow{3}{0.8cm}{Scales} & 1 & \multicolumn{1}{|c}{ 55.24/70.85} & 55.66/70.88 & 55.93/71.05 \\
& 2 & \multicolumn{1}{|c}{55.95/70.77} & 56.19/71.13 & 56.63/71.72 \\
& 3 & \multicolumn{1}{|c}{56.67/71.81} & 57.03/71.98 & \bf 57.28/72.01 \\
\end{tabular}}
\end{table}

\begin{table}[t]
\centering
\caption{\small  Ablation study on {\em mini}-ImageNet \wrt the choice of matching algorithm and mode on the \textit{mini}-Imagenet dataset (1- and 5-shot accuracy). We used the `Conv-4-64' backbone. }
\label{table_ablation_1}
\makebox[\linewidth]{
\setlength{\tabcolsep}{0.6em}
\renewcommand{\arraystretch}{1.2}
\begin{tabular}{c|cccc|cccc}
& \multicolumn{4}{c}{ Intra-level Matching} & \multicolumn{4}{c}{ Inter-level Matching}\\
 shot &  CM &  GM &  OT &  GR &  CM &  GM &  OT &  GR \\ \hline
 1 &  57.49 &  57.83 &  \bf 58.03 &  57.79 &  56.87 &  57.72 &  57.91 &  56.91 \\ 
 5 &  72.21 &  72.58 &  \bf 73.06 &  72.49 &  71.59 &  72.43 &  72.36 &  71.77 \\ 
\end{tabular}}
\end{table}

\comment{
\begin{table*}[b]
\caption{Evaluations on the Open MIC dataset for Protocol II (asterisk $^*\!L'$ indicates splits with the number of classes such that $L'\!\!<\!L$).}
\label{table_ablation_3}
\makebox[\textwidth]{
\setlength{\tabcolsep}{0.5em}
\renewcommand{\arraystretch}{1}
\fontsize{8.5}{9}\selectfont
\begin{tabular}{lcccccccccccc}
\toprule
Model && $ L $ & $shn$ & $hon$& $clv$& $clk$& $gls$ &$scl$& $sci$& $nat$& $shx$& $rlc$ \\
\hline
Relation Net & & \multirow{2}{*}{5} & $43.2$ & $49.6$ & $49.8$ & $62.1$ & $59.3$ & $51.5$ & $45.9$ & $54.8$ & $71.1$ & $72.0$ \\
SoSN &&  & $61.5$ & $63.6$ & $61.7$ & $74.5$ & $74.9$ & $72.9$ & $54.2$ & $68.9$ & $78.0$ & $79.1$ \\
\hline
 Relation Net & & \multirow{2}{*}{20} & $20.8$ & $25.7$ & $26.1$ & $34.3$ & $35.5 $ & $18.4 $ & $18.6 $ & $32.8  $ & $51.8  $ & $48.2  $ \\
SoSN  &&  & $37.4  $ & $37.5  $ & $34.9  $ & $49.6  $ & $55.2  $ & $55.5  $ & $25.1  $ & $45.3  $ & $61.9  $ & $56.6  $ \\
\hline
Relation Net & & \multirow{2}{*}{30} & $18.1  $ & $21.1  $ & $23.2  $ & $27.0  $ & $31.8  $ & $12.8  $ & $12.4  $ & $27.1  $ & $40.6  $ & $41.0  $ \\
SoSN  &&  & $35.5  $ & $36.0  $ & $33.5  $ & $47.7  $ & $52.3  $ & $53.0  $ & $21.1  $ & $40.8  $ & $58.3  $ & $52.7  $  \\
\hline
SoSN && 45 & $34.1 $ & $ 33.4 (^*39)$ & $ 29.2 $ & $45.2  $ & $ 48.5 $ & $ 49.6 (^*42)$ & $ 19.2 (^*36)$ & $ 38.0 $ & $ 54.1 $ & $49.3  $  \\
%
\hline
SoSN && 60 & $ 30.0 $ & -   & $25.5  $ & $42.6 $ & $46.6 $ & - & - & $ 37.5 $ & $51.3  $ & $46.6 $  \\
%
\hline
SoSN && 90 & $ 26.4 (^*78)$ &  -  & $24.6 (^80)$ & $41.8 $ & $39.2 $ & - & - & $ 33.0 $ & $49.4  $ & $39.5 $  \\
\bottomrule
\end{tabular}}
\centering Training on source images and testing on target images for every exhibition, respectively.
\end{table*}
}


For the unsupervised pipelines, the U-SoSN and U-MlSo models achieve very promising results on  Open MIC considering no class-wise annotations were used during the training step. To demonstrate this point further, the performance of U-SoSN is better than that of the supervised Relation Net. { The U-MlSo model with the Gate Module further outperforms U-SoSN by up to $\sim$3\% accuracy, which is close to the performance of the supervised SoSN model. Specifically, U-MlSo outperformed supervised SoSN on 4 splits.} The above experiment shows that the unsupervised framework is especially effective in the scenario if (i) the number of training samples from the base classes is limited, and (ii) the problem is closely related to the image retrieval rather than the pure object category recognition. 

\vspace{0.05cm}
\noindent\textbf{\hg Fine-grained FSL.}
{\hg For the fine-grained datasets, our proposed models are evaluated on the CUB Birds \cite{WahCUB_200_2011}, Stanford Dogs \cite{dogs} and Cars \cite{cars} datasets given the 5-way 1-shot and  5-way 5-shot protocols. We follow the same training, validation and testing splits as provided in \cite{huang2020low}. Table \ref{table5} demonstrates that our models outperform the baseline models such as Relation Net \cite{sung2017learning}, LRPABN \cite{huang2020low} and MattML \cite{zhu2020multi}. For CUB Birds, our best MlSo with FM (OT) achieves $\sim\!9.2\%$ and $\sim\!11.0\%$ improvements in accuracy (the 1- and 5-shot protocols, respectively) compared to the 1-st order Relation Net. 
For the Stanford Dogs and Cars datasets, the overall improvements of our MlSo are more significant. To illustrate this point, MlSo outperforms Relation Net by up to $\sim\!12.4\%$ and $\sim\!16.8\%$ accuracy on Stanford Dogs, and $\sim\!20.0\%$ and $\sim\!24.8\%$ on Stanford Cars (the 1- and 5-shot protocols, respectively). Furthermore, our unsupervised FSL variants, U-SoSN and U-MlSo, attain even higher gains in accuracy which are often in the range of between 10\% and 25\% compared to U-Relation Net.}

\vspace{0.05cm}
\noindent\textbf{\hg Few-shot Action Recognition (FSAR).}
{\hg We conclude our evaluations on FSL Action Recognition, denoted as FSAR for short. To this end, results are obtained on the HMDB51 \cite{Kuehne11}, UCF101 \cite{soomro2012ucf101} and \textit{mini}-MIT \cite{monfortmoments} datasets. We follow the  evaluation protocols in \cite{arn}. As every action clip may contains hundreds of frames, we resize all frames to $84\times84$ pixels and downsample clips along the temporal mode to reduce the usage of the GPU memory and limit the computational footprint. Table \ref{table6} shows that our MlSo outperforms the C3D Relation Net, SoSN and CMN \cite{cmn} models. These results are consistent with our evaluations on the category
recognition and fine-grained datasets.}

\subsection{Ablation Studies}
\label{sec:abl}
Below we conduct ablations studies and provide discussions regarding several components of our pipeline.

\vspace{0.05cm}
\noindent\textbf{Relationship Descriptors and Power Normalization.}  
Firstly, Table \ref{table3} shows that the use of PN brings $\sim\!0.8\%$ gain in accuracy over not using it. Not included in the tables, similar were our observations on the Open MIC dataset. Given the simplicity of PN, we have included it in our evaluations unless stated otherwise. 

Relationship descriptors from Table \ref{tab_methods} are evaluated in Table \ref{table3} according to which the single-level SoSN($\otimes$) model outperforms the single-level SoSN( $\otimes$+F) and SoSN($\otimes$+R) models. We expect that averaging over $Z$ support descriptors from $Z$ support images, as in SoSN($\otimes$),  removes the uncertainty in few-shot statistics, whereas the outer-products of support/query datapoints still enjoy the benefit of spatially-wise large convolutional feature maps (which helps form the robust second-order statistics). 

\vspace{0.05cm}
{ \noindent\textbf{Importance of Second-order Pooling}. Table \ref{table_tiered} ({\em tiered}-ImageNet) shows that Relation Net achieves  54.48\% and 71.32 \% accuracy on the 1- and 5-shot protocols, respectively. Relation Net can be considered identical with the SoSN model but it uses first-order pooling. In contrast, SoSN achieves 58.62 and 75.19 \% accuracy (1- and 5-shot protocols, respectively). The gain in accuracy can be attributed to the fact that SoSN is equipped with second-order pooling. 

Table \ref{table3} shows that on {\em mini}-ImageNet, Relation Net scores 50.44\% and 65.32\% accuracy on the 1- and 5-shot protocols, whereas SoSN scores  52.96\% and 68.63\% accuracy, respectively.

The similar trend can be observed on {\em mini}-ImageNet when aggregating over feature maps of images of $256\!\times\!256$ pixels. In such a  setting (not included in the tables as $256\!\times\!256$ pixels resolution is not a part of standard FSL protocol), Relation Net yields 54.01\% and 68.56\%  accuracy on the 1- and 5-shot protocols. In contrast, SoSN yields 57.74\% and 71.08\% accuracy, respectively.

SoP represents features of each image as second-order statistics which are invariant to the spatial order of features in feature maps, and the spatial size of these feature maps. Second-order statistics are also richer than first-order statistics, as indicated by gains in accuracy attained by SoSN in comparison with Relation Net. SoP is the most beneficial when feature representations provide many feature vectors for aggregation \eg, $N\!\geq\!d$. In such a case, the autocorrelation matrices may be of full rank (rank-$d$), thus capturing more statistical information in comparison to the first-order prototypes (equivalent of the rank-$1$ statistic).
}

\vspace{0.05cm}
{ \noindent\textbf{MlSo, SB, FM and VALSD. } Table \ref{table3} demonstrates on \textit{mini}-ImageNet that our MlSo outperforms SoSN and a larger number of prior works. The performance on the 5-way 1-shot and  5-way 5-shot protocols achieves the peak gain of $\sim\!6.0\%$ in accuracy  given 4 encoding levels and the Optimal Transport matching step, compared to the best single-level SoSN model.

Table \ref{table_ablation_3} shows that adding our Scale-wise Boosting (SB), the Feature Matching (FM) and the Visual Abstraction Level and Scale Discriminator (VALSD) yields  around $2\%$ gain in accuracy over the baseline model. For the SB+FM case, we mean that the advanced abstraction level and spatial scale matching strategy is used, thus the loss in Eq. \eqref{eq:sosn_new1} is used for SB+FM, whereas the loss in Eq. \eqref{eq:sosn22} is used for SB alone.

Table  \ref{table_ablation_2} shows that using four level of visual abstraction and three spatial scales is the best, which is consistent with our claim that the levels of visual abstraction and spatial scales need to be taken into account in few-shot learning.

Table \ref{table_ablation_1} shows that the Intra-level Matching strategy is overall better than the Inter-level Matching strategy. This is consistent with our expectations as the levels of visual abstraction and spatial scales are quite complementary. Therefore, matching spatial scales irrespective of abstraction levels is a meaningful strategy. Finally, the Gate Module and the Optimal Transport are two best performing strategies,  followed by the GRaph matching (GR) strategy and the Cosine Matching (CM) strategy. We suspect that GR was somewhat suboptimal due to the small-size dense adjacency matrix in our problem rather than the large scale sparse adjacency matrix which would normally capture a complex topology of some large graph (node classification, \etc).  OT performed robust matching as it is designed to find an optimal transportation plan between different levels of abstraction and spatial scales of support-query pairs. Nonetheless, GM appears to provide the best matching trade-off, that is, GM is almost as good as OT in terms of accuracy, and it is faster than OT in terms of computational complexity (\ie, there is no need to solve any linear programs). 
%
Finally, Figure \ref{fig:cam} shows
the visualization of multiple levels of feature abstraction in our FEN. Across all visualizations and their Gate Module scores (GM), the coarse-to-fine levels of feature abstraction appear to be complementary with each other. 
} 

\ifdefined\arxiv
\newcommand{\PowH}{3.0cm}
\newcommand{\PowHB}{2.875cm}
\newcommand{\PowW}{3.65cm}
\else
\newcommand{\PowH}{3.2cm}
\newcommand{\PowHB}{3.4cm}
\newcommand{\PowW}{3cm}
\fi

\comment{
\begin{figure*}[t]
\begin{subfigure}[t]{0.33\linewidth}
\centering\includegraphics[trim=0 0 0 0, clip=true, height=\PowH]{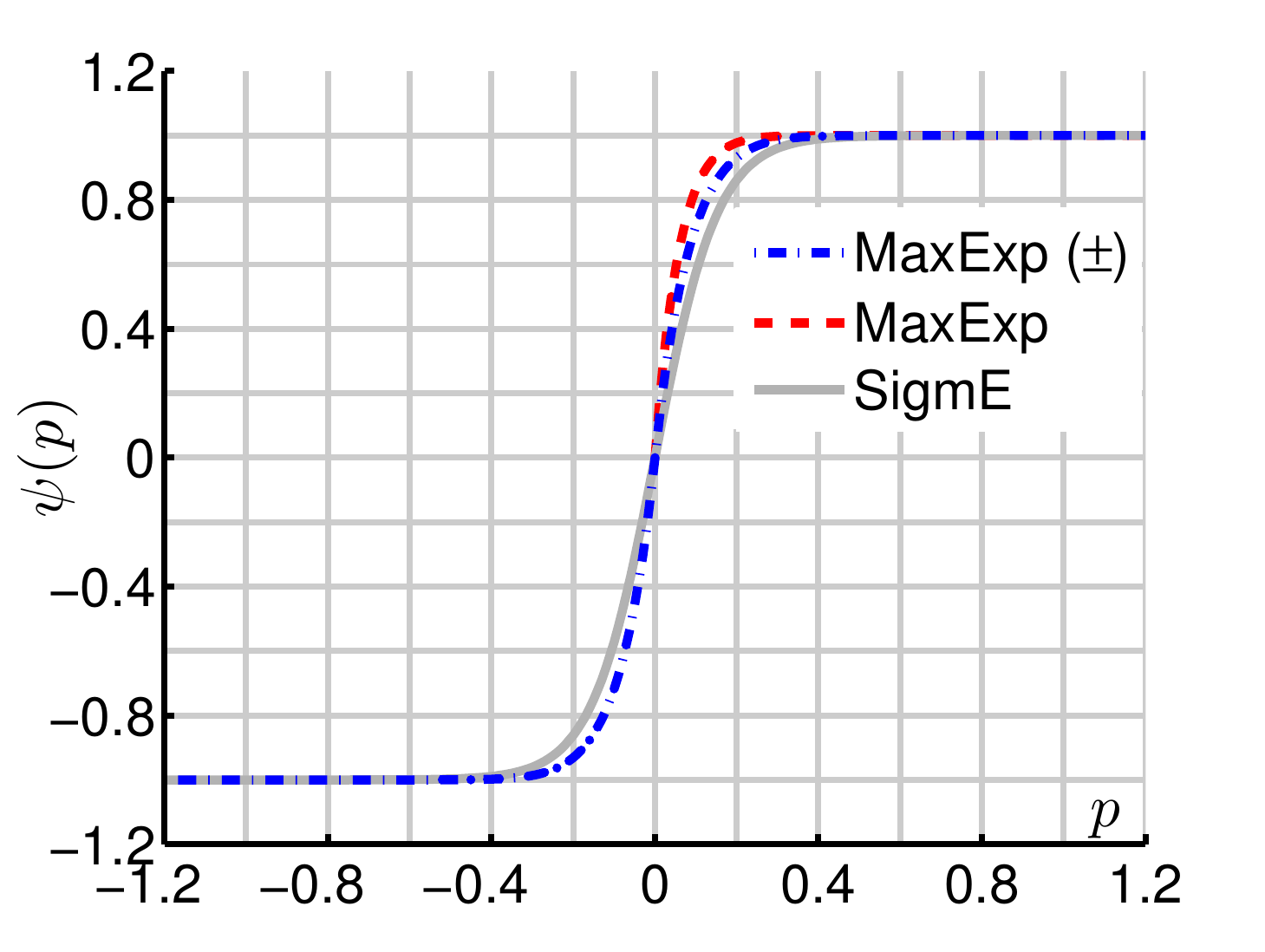}\vspace{-0.2cm}
\caption{\label{fig:pow1}}
\end{subfigure}
\begin{subfigure}[t]{0.33\linewidth}
\centering\includegraphics[trim=0 0 0 0, clip=true, height=\PowH]{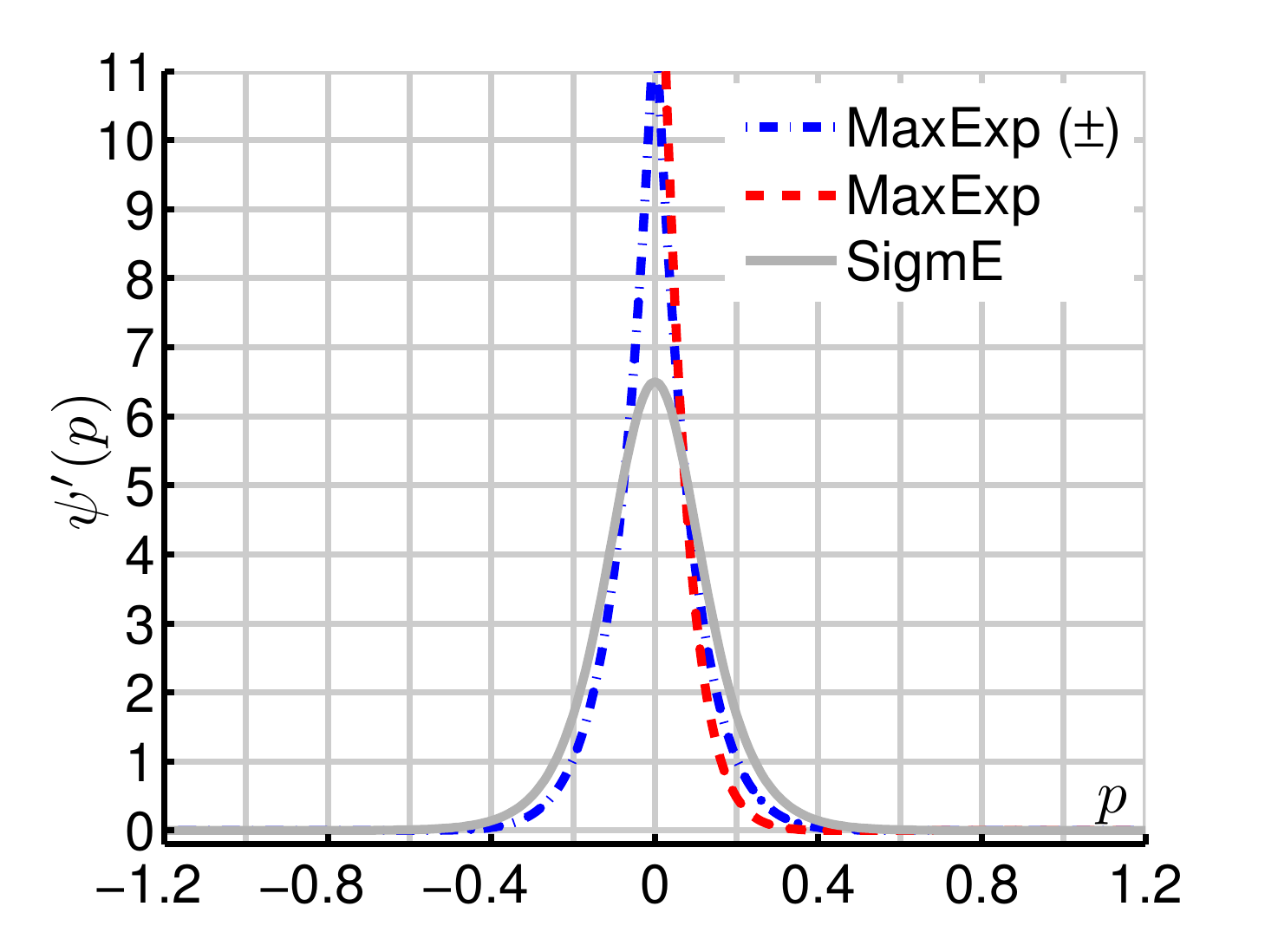}\vspace{-0.2cm}
\caption{\label{fig:pow2}}
\end{subfigure}
\begin{subfigure}[t]{0.33\linewidth}
\centering\includegraphics[trim=0 0 0 0, clip=true, height=\PowH]{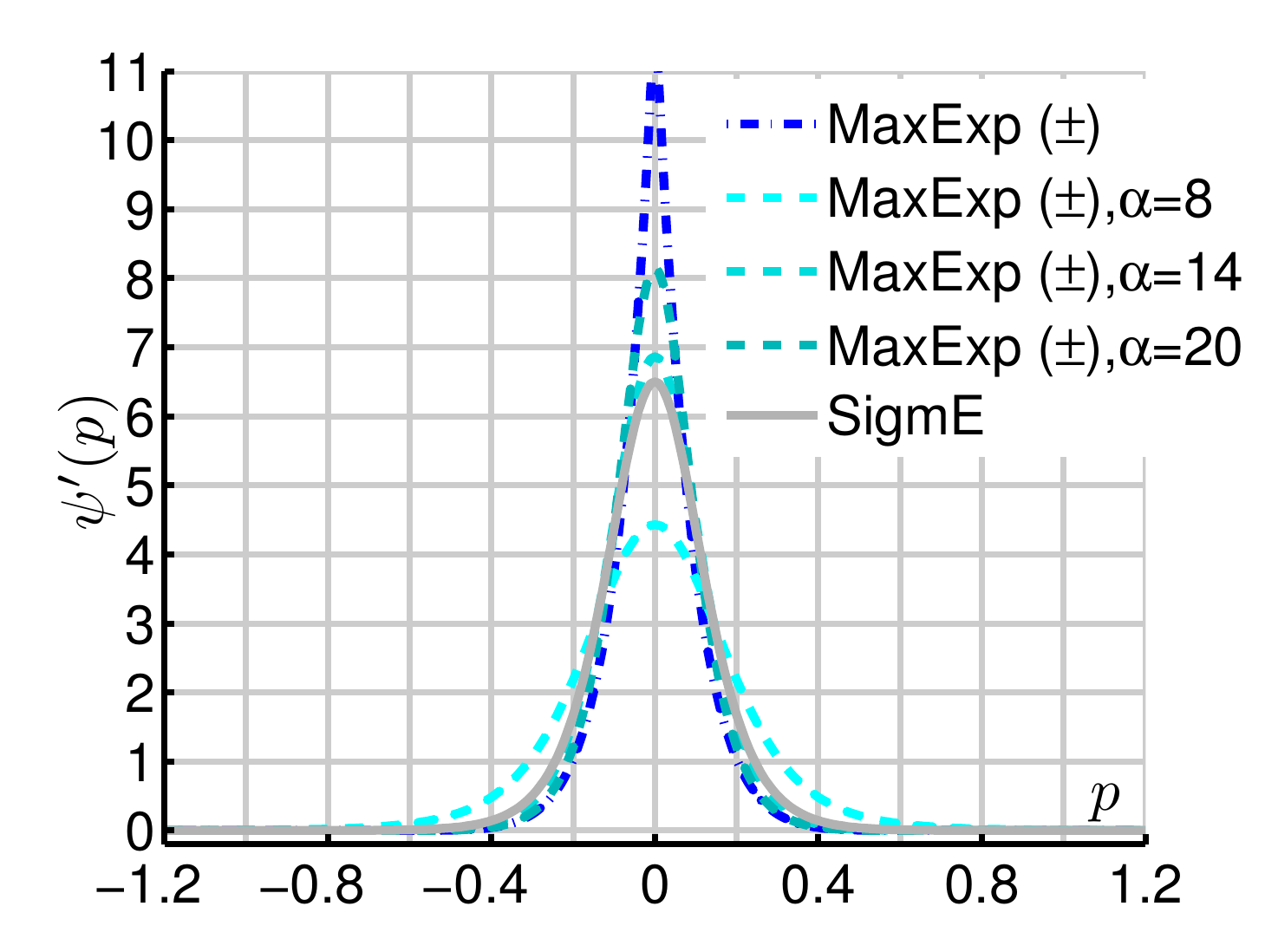}\vspace{-0.2cm}
\caption{\label{fig:pow3}}
\end{subfigure}
%


\caption{MaxExp$\,$($\pm$), MaxExp, and SigmE are in Figure \ref{fig:pow1}. Note that MaxExp$\,$($\pm$) extends MaxExp to negative values and is closely approximated by SigmE. Figure \ref{fig:pow2}: Derivatives of MaxExp$\,$($\pm$), MaxExp, and SigmE. Figure \ref{fig:pow3}: Smooth derivative of MaxExp$\,$($\pm$) with the soft maximum.}
\label{fig:power-norms}
\end{figure*}
}

\section{Conclusions}
\label{sec:conclusions}

We have presented the end-to-end trainable SoSN and MlSo models for supervised and unsupervised few-shot learning. With the use of  multiple levels of feature abstractions, multiple spatial scales, the Gate Module and the Visual Abstraction Level and Scale Discriminator, we have shown how to  learn efficiently the similarity between the support-query pairs captured by the relation descriptors.  We have investigated how to capture relations between the query and support matrices, and how to produce multiple complementary levels of feature information. We have also investigated several strategies for the level of abstraction and spatial scale matching to showcase the importance of decomposing the support-query pairs into multiple spatial scales at multiple levels of feature abstraction. 
MlSo demonstrates consistent large gains in accuracy across all benchmarks for both supervised and unsupervised learning. 
Given the simplicity of our approach, we believe that the SoSN and MlSo models are interesting propositions that can serve as a starting point in designing more elaborate FSL approaches. 

\ifCLASSOPTIONcaptionsoff
  \newpage
\fi

{\small
\bibliographystyle{ieee_fullname}
\bibliography{TMM}
}

\begin{IEEEbiography}
[{\includegraphics[width=1in,height=1.25in,clip,keepaspectratio]{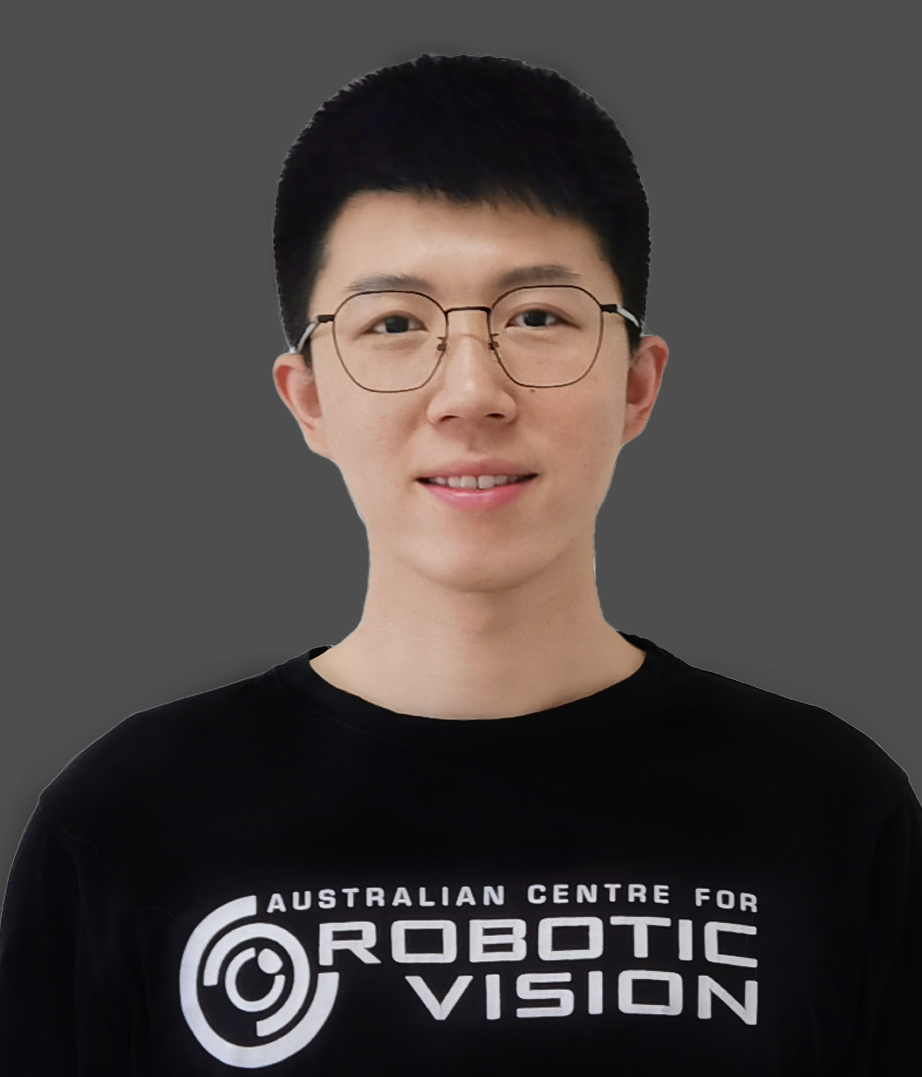}}]{Hongguang Zhang}
Hongguang Zhang is currently an assistant professor in Systems Engineering Institute, AMS. Before this, he received his PhD degree in computer vision and machine learning at the Australian National University and Data61/CSIRO, Canberra, Australia in 2020. He received the BSc degree in electrical engineering and automation from Shanghai Jiao Tong University, Shanghai, China in 2014. He received his MSc degree in electronic science and technology from National University of Defense Technology, Changsha, China in 2016. His interests include fine-grained image classification, zero-shot learning, few-shot learning and deep learning methods.
\end{IEEEbiography}
\vspace{-1cm}
\begin{IEEEbiography}
[{\includegraphics[width=1in,height=1.25in,clip,keepaspectratio]{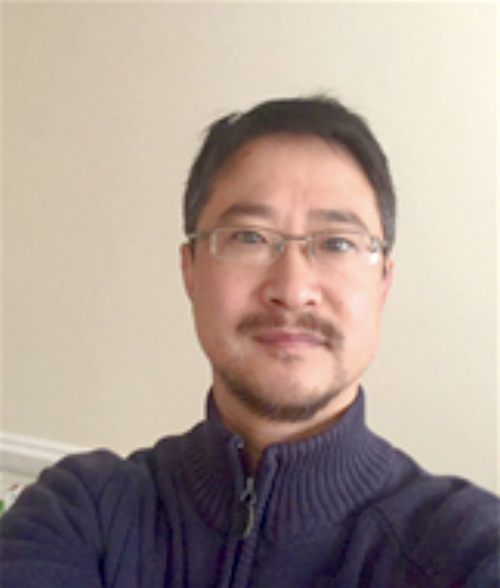}}]{Hongdong Li} was with NICTA Canberra Labs, prior
to 2010, where he involved in the “Australia Bionic
Eyes” Project. He is currently a Professor with the
Computer Vision Group, The Australian National
University. He is also a Chief Investigator of the
Australia ARC Centre of Excellence for Robotic
Vision. His research interests include 3D vision
reconstruction, structure from motion, multi-view
geometry, as well as applications of optimization
methods in computer vision. He was a recipient of
the CVPR Best Paper Award in 2012 and the Marr
Prize Honorable Mention in 2017. 
\end{IEEEbiography}
\vspace{-1cm}
\begin{IEEEbiography}
[{\includegraphics[width=1in,height=1.25in,clip,keepaspectratio]{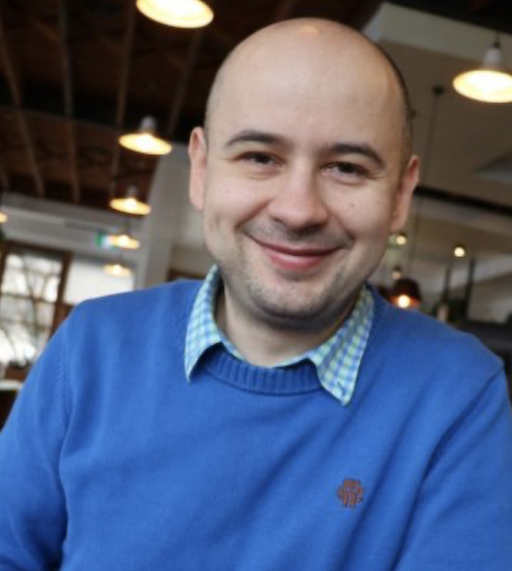}}]{Piotr Koniusz}
A Senior Research Scientist in Machine
Learning Research Group at Data61/CSIRO,
formerly known as NICTA, and a Senior Honorary Lecturer at Australian National University
(ANU). Previously, he worked as a postdoctoral
researcher in the team LEAR, INRIA, France.
He received his BSc degree in Telecommunications and Software Engineering in 2004 from the
Warsaw University of Technology, Poland, and
completed his PhD degree in Computer Vision
in 2013 at CVSSP, University of Surrey, UK. His
interests include visual categorization, spectral learning on graphs and
tensors, kernel methods and deep learning.
\end{IEEEbiography}





\end{document}